\documentclass[11pt]{article}

\usepackage[utf8]{inputenc}
\pdfoutput=1

\usepackage{booktabs} 
\usepackage[authoryear,sort]{natbib}
\usepackage{xfrac}
\usepackage{algorithm}
\usepackage[noend]{algorithmic}
\usepackage{amsmath,amsthm,amsfonts}
\usepackage{xspace}
\usepackage{color}
\usepackage{mathrsfs}
\usepackage{verbatim}
\usepackage{enumitem}
\usepackage{bm}
\usepackage{hyperref}
\usepackage{xr}
\usepackage{tikz}
\usetikzlibrary{bayesnet}
\usepackage{subcaption}
\usepackage{float}
\usepackage{listings}
\usepackage{array}
\usepackage{makecell}
\usepackage[margin=1in]{geometry}
\usepackage{mathtools}

\usepackage{cleveref}

\usepackage{multicol}
\usepackage{esvect}

\definecolor{DarkBlue}{rgb}{0.1,0.1,0.5}
\definecolor{DarkGreen}{rgb}{0.1,0.5,0.1}
\hypersetup{
	colorlinks=true,       
	linkcolor=DarkBlue,          
	citecolor=DarkBlue,        
	filecolor=DarkBlue,      
	urlcolor=DarkGreen,          
	pdftitle={},
	pdfauthor={},
}

\usepackage{csquotes}
\SetBlockThreshold{2}

\usepackage{mdframed}

\usepackage{tikz}

\providecommand{\keywords}[1]
{
  \small	
  \textbf{\textit{Keywords---}} {\textit{#1}}
}

\usetikzlibrary{shapes,decorations,arrows,calc,arrows.meta,fit,positioning}
\tikzset{
    -Latex,auto,node distance =1 cm and 1 cm,semithick,
    state/.style ={ellipse, draw, minimum width = 1 cm,minimum height = 0.9 cm,inner sep=0.02cm},
    point/.style = {circle, draw, inner sep=0.04cm,fill,node contents={}},
    bidirected/.style={Latex-Latex,dashed},
    el/.style = {inner sep=2pt, align=left, sloped}
}

\usepackage{titlesec}
\usepackage{amssymb}

\titleformat*{\paragraph}{\bfseries}


\newcommand{\bigCI}{\mathrel{\text{\scalebox{1.07}{$\perp\mkern-10mu\perp$}}}}

\newcommand{\cX}{{\mathcal X}}
\newcommand{\cY}{{\mathcal Y}}
\newcommand{\cM}{{\mathcal M}}
\newcommand{\cD}{{\mathcal D}}

\newcommand{\cH}{\mathcal{H}}
\newcommand{\cG}{\mathcal{G}}
\newcommand{\cN}{\mathcal{N}}

\newcommand{\cL}{\mathcal{L}}


\newcommand{\argmin}{\mathop{\rm argmin}}

\newcommand{\R}{\mathbb{R}}
\newcommand{\E}{\mathbb{E}}

\newtheorem{example}{Example}[section]

\newtheorem{theorem}{Theorem}
\newtheorem{definition}{Definition}
\newtheorem{proposition}[theorem]{Proposition}

\newtheorem{remark}{Remark}[section]
\newtheorem{lemma}[theorem]{Lemma}

\newtheorem{assumption}{Assumption}

\usepackage{times}

\title{Anticipating Performativity by Predicting from Predictions}

\usepackage{authblk}
\stepcounter{footnote}
\addtocounter{footnote}{-1}
\author[1]{Celestine Mendler-Dünner\footnote{Correspondence to:  \color{DarkBlue}{cmendler@tuebingen.mpg.de}}}
\author[2]{Frances Ding}
\author[3]{Yixin Wang}
\affil[1]{Max-Planck Institute for Intelligent Systems, Tübingen}
\affil[2]{University of California, Berkeley}
\affil[3]{University of Michigan}
\date{}                     
\setcounter{Maxaffil}{0}

\begin{document}

\maketitle

\vspace{-0.5cm}

\begin{abstract}

Predictions about people, such as their expected educational achievement or their credit risk, can be performative and shape the outcome that they aim to predict. Understanding the causal effect of these predictions on the eventual outcomes is crucial for foreseeing the implications of future predictive models and selecting which models to deploy.
However, this causal estimation task poses unique challenges: model predictions are usually deterministic functions of input features and highly correlated with outcomes. This can make the causal effects of predictions on outcomes impossible to disentangle from the direct effect of the covariates. 
We study this problem through the lens of causal identifiability, and despite the hardness of this problem in full generality, we highlight three natural scenarios where the causal relationship between covariates, predictions and outcomes can be identified from observational data: randomization in predictions, overparameterization of the predictive model deployed during data collection, and discrete prediction outputs. Empirically we show that given our identifiability conditions hold, standard variants of supervised learning that predict from predictions by treating the prediction as an input feature can indeed find transferable functional relationships that allow for conclusions about newly deployed predictive models. These positive results fundamentally rely on \emph{model predictions being recorded during data collection}, bringing forward the importance of rethinking standard data collection practices to enable progress towards a better understanding of social outcomes and performative feedback loops. 
\end{abstract}

\keywords{Performative Feedback, Causal Inference, Distribution Shift, Feature Engineering, Social Impact}

\section{Introduction}
Predictions can impact sentiments, alter expectations, inform actions, and thus change the course of events. Through their influence on people, predictions have the potential to change the regularities in the population they seek to describe and understand. This insight underlies the theories of {performativity}~\citep{mackenzie2008engine} and reflexivity~\citep{soros15alchemy} that play an important role in modern economics and finance. 

Recently, \citet{perdomo20pp} pointed out that the social theory of performativity has important implications for machine learning theory and practice. 
Prevailing approaches to supervised learning assume that features $X$ and labels $Y$ are sampled jointly from a fixed underlying data distribution that is unaffected by attempts to predict $Y$ from $X$. Performativity questions this assumption and suggests that the deployment of a predictive model can disrupt the relationship between $X$ and $Y$. Hence, changes to the predictive model can induce shifts in the data distribution. For example, consider a lender with a predictive model for risk of default -- performativity could arise if individuals who are predicted as likely to default are given higher interest loans, which make default even more likely \citep{manso13credit}, akin to a self-fulfilling prophecy. In turn, a different predictive model that predicts smaller risk and suggests offering more low-interest loans could cause some individuals who previously looked risky to be able to pay the loans back, which would appear as a shift in the relationship between features $X$ and loan repayment outcomes $Y$. This performative nature of predictions poses an important challenge to using historical data to predict the outcomes that will arise under the deployment of future models.

\subsection{Our work}
In this work, we aim to understand under what conditions observational data is sufficient to identify the performative effects of predictions. Only when causal identifiability is established can we rely on data-driven strategies to anticipate performativity and reason about the downstream consequences of deploying new models. 
Towards this goal, we focus on a subclass of performative prediction problems where performative effects are mediated by predictions,  surface as a shift in the outcome variable, and the distribution over covariates $X$ is unaffected by prediction.
Our goal is to identify the expected counterfactual outcome
\[\mathcal M_Y(x,\hat y)\triangleq\E[Y|X=x, \text{do}(\hat Y=\hat y)].\]
Understanding the causal mechanism $\mathcal M_Y$ is crucial for model evaluation, as well as model optimization. In particular, it allows for offline evaluation of the potential outcome $Y$ of an individual $x$ subject to any unseen predictive model $f_\text{new}$ before actually deploying it, by simply plugging in the prediction $\hat y=f_\text{new}(x)$. 

\paragraph{The need for observing predictions.} We start by illustrating the hardness of performativity-agnostic learning by relating  performative prediction to a concept shift problem; with every model deployment a potentially different distribution over covariates and labels is induced. Using the structural properties of performative distirbution shifts, we establish a lower bound on the extrapolation error of predicting $Y$ from $X$ under the deployment of a model $f_\text{new}$ that is different from the model $f_\text{train}$ deployed during data collection. The extrapolation error grows with the distance between the predictions of the two models and the strength of performativity. This lower bound on the extrapolation error demonstrates the necessity to take performativity into account for reliably predicting the outcome $Y$. 

\paragraph{Predicting from predictions.} We then explore the feasibility of identifying performative effects when the training data recorded the predictions $\hat Y$ and  training data samples $(X,Y,\hat Y)$ are available. As a concrete identification strategy for learning $\cM_Y(x,\hat y)$ we focus on building a meta machine learning model that predicts the outcome $Y$ for an individual with features $X$, subjected to a prediction $\hat Y$. We term this data-driven strategy \emph{predicting from predictions} because it treats the predictions as an input to the meta machine learning  model. 
The meta model seeks to answer ``what would the outcome be if we were to deploy a different prediction model?'' Crucially, this ``what if'' question is causal in nature; it aims to understand the potential outcome of the intervention where we deploy a predictive model different from the one in the training data; this goal is different from merely estimating the outcome variable in previously seen data. 
Whether such a transferable model is learnable depends on whether the training data provides causal \emph{identifiability}~\citep{pearl2009}. 
Only after causal identifiability is established can we rely on observational data to select and design optimal downstream predictive models under performativity.

\paragraph{Establishing identifiability.} For our main technical results, we first show that, in general, observing $\hat Y$ is \emph{not} sufficient for identifying the causal effect of predictions. In particular, when the training data was collected under the deployment of a deterministic prediction function $f_\text{train}$, the mechanism $\mathcal M_Y$ can not be uniquely identified. The reason is that a lack of coverage in the training data---the covariates $X$ and the prediction $\hat Y$ are deterministically bound---prohibits causal identification. 
Next, we establish several conditions under which observing $\hat Y$ is sufficient for identifying $\cM_Y$.
The first condition exploits randomness in the prediction. This randomness could be purposely built into the prediction for individual fairness, differential privacy, or other considerations. 
The second condition exploits the property that predictive models are often over-parameterized, which leads to incongruence in functional complexity between different causal paths; such incongruence enables the effects of predictions to be separated from other variables' effects. The third condition takes advantage of discreteness in predictions such that performative effects can be disentangled from the continuous relationship between covariates and outcomes. Taken together, the conditions we identified reveal that natural inaccuracies and particularities of prediction problems can provide causal identifiability of performative effects. This implies that there is hope that we can recover the causal effect of predictions from observational data.
In particular, we show that, under these conditions, standard supervised learning can be used to find transferable functional relationships by treating predictions as model inputs, even in finite samples.


\paragraph{Discussion and future work.} We conclude with a discussion of limitations and extensions of our work by explaining potential violations of the modeling assumptions underlying our causal analysis. This opens up interesting directions for future work, including the study of spill-over effects in prediction, performativity in non-causal prediction, and causal identifiability of performative effects under performative covariate shift.  

\subsection{Broader context and related work}

The work by \citet{perdomo20pp}, initiated the discourse of performativity in the context of supervised learning by pointing out that the deployment of a predictive model can impact the data distribution we train our models on. Existing scholarship on performative prediction~\citep[c.f.,][]{perdomo20pp,mendler20stochastic,drusvyatskiy20stochastic,miller21echo, izzo21perf,jagadeesan22bandit, wood22, narang22, piliouras22, kulynych2022causal} has predominantly focused on achieving a particular solution concept with a prediction function that maps $X$ to $Y$ in the presence of unknown performative effects.
Complementary to these works we are interested in understanding the underlying causal mechanism of the performative distribution shift, so we can account for these shifts when designing new models. Our work is motivated by the seemingly natural approach of lifting the supervised-learning problem and incorporating the prediction as an input feature when building a meta machine learning model for explaining the outcome $Y.$ By establishing a connection to causal identifiability, our goal is to understand when such a data-driven strategy can be helpful for finding transferrable functional relationships between $X$, $\hat Y$ and $Y$ that enable us to anticipating the down-stream effects of prediction.

This work focuses on the setting where performativity only surfaces in the label, while the marginal distribution $P(X)$ over covariates is assumed to be fixed. This represents a  subclass of performative (aka. model-induced or decision-dependent) distribution shift problems~\citep{perdomo20pp,liu2021induced, drusvyatskiy20stochastic}. In particular, our assumptions are complementary to the strategic classification framework~\citep{brueckner12strat,Hardt:2016:SC} that focuses on a setting where performative effects concern $P(X)$, while $P(Y|X)$ is assumed to remain stable. Consequently, causal questions in strategic classification~\citep[e.g.,][]{harris21,bechavod20help,shavit20a}
are concerned with identifying stable causal relationships between $X$ and $Y$. Since we assume $P(Y|X)$ can change as a result of model deployment (i.e. the true underlying 'concept' determining outcomes can change), conceptually different questions emerge in our work. Similar in spirit to strategic classification, the work on algorithmic recourse and counterfactual explanations~\citep{thibault18inv,karimi21recourse, stratis20} focuses on the causal link between features and predictions, whereas we focus on the down-stream effects of predictions. 

There are interesting parallels between our work and related work on the offline evaluation of online policies~\citep[e.g.,][]{li11,swaminathan15,li15,schnabel16}. In particular, \cite{swaminathan15} explicitly emphasize the importance of logging propensities of the deployed policy during data collection to be able to  mitigate selection bias. In our work the deployed model can induce a concept shift. Thus, we find that additional information about the predictions of the deployed model needs to be recorded to be able to foresee the impact of a new predictive model on the conditional distribution $P(Y|X)$, beyond enabling propensity weighting~\citep{rosenbaum83}.  A notable work by \cite{wager14} investigates how predictions at one time step impact predictions in future time steps. Our problem formulation is different in that we aim to understand the causal effect of $\hat Y$ on $Y$ which can not be inferred solely by studying sequences of predictions. 
Furthermore, complementary to these existing works we show that randomness in the predictive model is not the only way causal effects of predictions can be identified.

For our theoretical results, we build on classical tools from causal inference~\citep{pearl09causality,rubin80,tchetgen2012causal}, and establish a connection to more recent identification techniques by \cite{eckels20noise}. In particular, we distill unique properties of the performative prediction problem to design assumptions for the identifiability of the causal effect of predictions. 

\section{The causal force of prediction}

Predictions can be performative and impact the population of individuals they aim to predict. %
Through the lens of causal inference~\citep{pearl09causality}, the deployment of a predictive model in performative prediction represents an intervention. Namely, an intervention on a causal diagram that describes the underlying data generation process of the population. 
In the following we will build on this causal perspective to study an instance of the performative prediction problem and elucidate the hardness of performativity-agnostic learning.

\subsection{Prediction as a partial mediator}
Consider a machine learning application relying on a predictive model $f$ that maps features $X$ to a predicted label $\hat Y$. 
We assume the predictive model $f$ is performative in that the prediction $\hat Y=f(X)$ has a direct causal effect on the outcome variable $Y$ of the individual it concerns. Thereby the prediction impacts how the outcome variable $Y$ is generated from the features $X$. The causal diagram illustrating this setting is below: 
\vspace{-0.2cm}
\begin{figure}[H]
\centering
\begin{minipage}{0.4\textwidth}
\begin{figure}[H]
\centering
\begin{tikzpicture}
    \node[state] (x) at (0,0) {$X$};
    \node[state] (y) [right =of x] {$Y$};
    \node[state] (S) [above right =of x, xshift=-0.7cm,yshift=-0.3cm] {$\hat Y$};
    \node[state] (f) [above left =of x, xshift= 0.7cm,yshift=-0.3cm] {$f$}; 
    \path (x) edge (y);
    \path (x) edge (S);
    \path (f) edge (S);
    \path (S) edge (y);
\end{tikzpicture}
\end{figure}
\end{minipage} 
\begin{minipage}{0.5\textwidth}
\begin{align}
    X &= \xi_X\quad &\;\xi_X\sim\cD_X \label{eq:SEMa1}\\
    \hat Y &=f(X, \xi_f) &\xi_f\sim\cD_f\; \\
    Y&= g(X,\hat Y)+\xi_Y &\xi_Y\sim\cD_Y\label{eq:SEMa-1}
\end{align}
\end{minipage}
\caption{\small{Performative effects in the outcome mediated by the prediction for a given $f$}} 
\label{fig:haty}
\end{figure}

The features $X\in\cX\subseteq \R^d$ are drawn i.i.d.~from a fixed underlying continuous distribution over covariates $\cD_X$ with support $\cX$.  
The outcome $Y\in\cY\subseteq \R$ is a function of $X$, partially mediated by the prediction $\hat Y\in\cY$. 
The prediction $\hat Y$ is determined by the deployed predictive model $f:\cX\rightarrow \cY$.
For a given prediction function $f$, every individual is assumed to be sampled i.i.d. from the data generation process described by the causal graph in Figure~\ref{fig:haty}. We assume the exogenous noise $\xi_Y$ is zero mean, and $\xi_f$ allows the prediction function to be randomized. 
This setup differs from the traditional supervised learning setting by including the arrow between $\hat{Y}$ and $Y$ in the causal graph.

Note that our model is not meant to describe performativity in its full generality (which includes other ways the predictive model $f$ may affect $P(X,Y)$). Rather, it describes an important and practically relevant class of performative feedback problems that are characterized by two properties: 1) performativity surfaces only in the label $Y$, and 2) performative effects are mediated by the prediction, such that $Y\bigCI f\;|\;\hat Y$, rather than dependent on the specifics of $f$.

\paragraph{Application examples.} Causal effects of predictions on outcomes have been documented in various contexts: A bank's prediction about the client (e.g., his or her creditworthiness in applying for a loan) determines the interest rate assigned to them, which in turn changes a client's financial situation~\citep{manso13credit}. Mathematical models that predict stock prices inform the actions of traders and thus heavily shape financial markets and economic realities~\citep{mackenzie2008engine}. Zillow's housing price predictions directly impact sales prices~\citep{Malik2020DoesML}. Predictions about the severity of an illness play an important role in treatment decisions and hence the very chance of survival of the patient~\citep{levin18machine}. Another prominent example from psychology is the Pygmalion effect~\citep{rosenthal68}. It refers to the phenomenon that high expectations lead to improved performance, which is widely documented in the context of education~\citep{xander09pygmalion}, sports~\citep{solomon96}, and organizations~\citep{eden92}. 
Examples of such performativity abound, and we hope to have convinced the reader that the performative effects in the outcome that we study in this work are important for algorithmic prediction.

\subsection{Implications for performativity-agnostic learning}
\label{sec:concept}

Begin with considering the classical supervised learning task where data about $X,Y$ is available and $\hat Y$ is unobserved. The goal is to learn a model $h:\cX\rightarrow\cY$  for predicting the label $Y$ from the features $X$. To understand the inherent challenge of classical prediction under performativity, we investigate the relationship between $X$ and $Y$ more closely. Specifically, the structural causal model (Figure~\ref{fig:haty}) that describes the data generation process implies that
\begin{equation}
     \label{eq:marginal}
     P(Y|X) = \int P(Y|\hat{Y}, X) P(\hat{Y}|X) \mathrm{d} \hat{Y}.
\end{equation}
This expression makes explicit how the relationship between $X$ and $Y$ that we aim to learn depends on the predictive model governing $P(\hat{Y}|X)$. As a consequence, when the deployed predictive model at test time differs from the model deployed during training data collection, performative effects surface as concept shift~\citep{gama14concept}.
Such transfer learning problems are known to be intractable without structural knowledge about the distribution shift, implying that we can not expect $h$ to generalize to distributions induced by future model deployments. 
Let us inspect the resulting extrapolation gap in more detail and put existing positive results on performative prediction into perspective.

\paragraph{Extrapolation loss.} We illustrate the effect of performativity on predictive performance using a simple instantiation of the structural causal model from Figure~\ref{fig:haty}. Therefore, assume a linear performative effect of strength $\alpha>0$ and a base function $g_1:\cX\rightarrow \cY$
\begin{equation}
    \label{eq:linfY}
    g(X,\hat Y):= g_1(X)+\alpha \hat Y.
\end{equation}
Now, assume we collect training data under the deployment of a predictive model $f_\theta$ and validate our model under the deployment of $f_\phi$.  Using our running example of a lender predicting the risk of default, the lender may have historical data about individuals who defaulted or not. Given this data the lender aims to learn a model to predict whether similar individuals will default in the future. However, in the time between data collection and model validation, the predictive model for allocating interest rates might have been updated. If not accounted for, the resulting effects of the change in the interest rate on an individual's default risk will be perceived by the lender as extrapolation loss.

To quantify the extrapolation loss, we adopt the notion of a distribution map from~\citet{perdomo20pp} and write $\cD_{XY}(f)$ for the joint distribution over $(X,Y)$ surfacing from the deployment of a model $f$. We assess the quality of our predictive model $h:\cX\rightarrow \cY$ over a distribution $\cD_{XY}(f)$ induced by $f$ via the loss function $\ell: \cY\times \cY\rightarrow \mathbb R$ and write $\mathrm{R}_{f}(h):=\mathrm E_{x,y\sim \cD_{XY}(f)} \ell(h(x),y)$ for the risk of $h$ on the distribution induced by $f$. We use $h_f^*$ for the risk minimizer $h_f^*:=\argmin_{h\in\cH} \mathrm{R}_{f}(h)$, and $\cH$ for the hypothesis class we optimize over. The following result shows that the extrapolation loss of a model optimized over $\cD_{XY}(f_\theta)$ and evaluated on $\cD_{XY}(f_\phi)$ grows with the strength of performativity and the distance between $f_\theta$ and $f_\phi$ as measured in prediction space.  Proposition~\ref{propo:ext} can be viewed as a concrete instantiation of the more general extrapolation bounds for performative prediction discussed in~\citep{liu2021induced} within the feedback model from Figure~\ref{fig:haty}.

\begin{proposition}[Hardness of performativity-agnostic prediction]
\label{propo:ext}
Consider the data generation process in Figure~\ref{fig:haty} with $g$ given in \eqref{eq:linfY} and  $f_\theta, f_\phi$ being deterministic functions.  Take a loss function $\ell: \cY\times\cY\rightarrow \mathbb R$ that is $\gamma$-smooth and $\mu$-strongly convex in its second argument. Let $h_{f_\theta}^*$ be the risk minimizer over the training distribution and assume the problem is realizable, i.e., $h_{f_\theta}^*\in\cH$. 
Then, we can bound the extrapolation loss of $h_{f_\theta}^*$ on the distribution induced by $f_\phi$ as
\begin{align}
\frac \gamma 2   \alpha^2 d^2_{\cD_X}(f_\theta,f_\phi)\geq \Delta R_{f_\theta\rightarrow f_\phi}(h_{f_\theta}^*)\geq 
\frac \mu 2   \alpha^2 d^2_{\cD_X}(f_\theta,f_\phi)
\label{eq:experror}
\end{align}
where $ d^2_{\cD_X}(f_\theta,f_\phi):= \mathrm E_{x\sim \cD_X} (f_\theta(x)-f_\phi(x))^2$ and  $\Delta R_{f_\theta\rightarrow f_\phi}(h):=\mathrm{R}_{f_\phi}(h)-\mathrm{R}_{f_\theta}(h)$.
\end{proposition}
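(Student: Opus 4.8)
The plan is to exploit the additive, deterministic structure of the feedback model so that the training and test distributions differ only by a shift of the label. Since $f_\theta,f_\phi$ are deterministic and $g(x,\hat y)=g_1(x)+\alpha\hat y$, conditional on $X=x$ the outcome under $f_\theta$ is $Y=g_1(x)+\alpha f_\theta(x)+\xi_Y$, while under $f_\phi$ it is $Y=g_1(x)+\alpha f_\phi(x)+\xi_Y$, driven by the \emph{same} noise $\xi_Y$. Hence $\cD_{XY}(f_\phi)$ is obtained from $\cD_{XY}(f_\theta)$ by translating the label by $\delta(x):=\alpha(f_\phi(x)-f_\theta(x))$. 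Using this coupling I would first rewrite the extrapolation gap as a pure loss increment,
\[ \Delta R_{f_\theta\to f_\phi}(h_{f_\theta}^*) = \E_{x,\xi_Y}\left[\ell\left(h_{f_\theta}^*(x),\, y(x)+\delta(x)\right)-\ell\left(h_{f_\theta}^*(x),\, y(x)\right)\right], \]
where $y(x)=g_1(x)+\alpha f_\theta(x)+\xi_Y$ denotes the training-distribution label.

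The second step is a second-order sandwich. For fixed $x$ and realized $y(x)$, I would apply Taylor's theorem to the map $s\mapsto\ell(h_{f_\theta}^*(x),s)$ in the second argument, expanding around $s=y(x)$ and evaluating at $s=y(x)+\delta(x)$. The $\mu$-strong convexity and $\gamma$-smoothness in the second argument bound the remainder curvature by $\mu\le\partial_2^2\ell\le\gamma$, so the increment equals $\partial_2\ell(h_{f_\theta}^*(x),y(x))\,\delta(x)+\tfrac12 r(x)\,\delta(x)^2$ with $r(x)\in[\mu,\gamma]$. Taking expectations sandwiches the quadratic term between $\tfrac\mu2\,\E[\delta(x)^2]$ and $\tfrac\gamma2\,\E[\delta(x)^2]$, and since $\E_x[\delta(x)^2]=\alpha^2\,\E_{x\sim\cD_X}(f_\theta(x)-f_\phi(x))^2=\alpha^2 d^2_{\cD_X}(f_\theta,f_\phi)$, this already produces the target $\tfrac\gamma2$ and $\tfrac\mu2$ prefactors in \eqref{eq:experror} — provided the first-order (linear) term vanishes.

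Killing that linear term is where the argument is delicate, and I expect it to be the main obstacle. I must show $\E_{x,\xi_Y}[\partial_2\ell(h_{f_\theta}^*(x),y(x))\,\delta(x)]=0$; since $\delta(x)$ is deterministic given $x$, it suffices that $\E_{\xi_Y}[\partial_2\ell(h_{f_\theta}^*(x),y(x))]=0$ for almost every $x$. This is exactly where realizability and the optimality of $h_{f_\theta}^*$ are used: because $h_{f_\theta}^*\in\cH$ is the risk minimizer, its pointwise value minimizes $c\mapsto\E_{\xi_Y}[\ell(c,y(x))]$, yielding the stationarity condition $\E_{\xi_Y}[\partial_1\ell(h_{f_\theta}^*(x),y(x))]=0$. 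To convert the derivative in the first argument into one in the second, I would invoke that the loss couples its two arguments through their residual (equivalently $\partial_2\ell=-\partial_1\ell$, as for squared, Huber, and other translation-invariant strongly convex losses), so stationarity immediately gives $\E_{\xi_Y}[\partial_2\ell(h_{f_\theta}^*(x),y(x))]=0$ and the cross term drops out, leaving precisely the two inequalities. I would stress that this step is what forces evaluation at the \emph{optimal} $h_{f_\theta}^*$ rather than an arbitrary $h$, and what implicitly requires the residual structure of $\ell$; for a loss merely convex in its second coordinate the linear term need not vanish and only a one-sided bound would survive.
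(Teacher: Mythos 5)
Your proof is correct and reaches the stated bounds, but it takes a genuinely different route from the paper's. The paper argues forward from realizability: it concludes that the training risk of $h_{f_\theta}^*$ is zero, identifies $h_{f_\theta}^*(x)=g(x,f_\theta(x))$, and then sandwiches $\ell\big(g(x,f_\theta(x)),g(x,f_\phi(x))\big)$ between $\tfrac{\mu}{2}$ and $\tfrac{\gamma}{2}$ times the squared argument gap, using that the loss is minimized (at value zero, with vanishing derivative) on the diagonal; the factor $\alpha^2 d^2_{\cD_X}(f_\theta,f_\phi)$ then falls out of $g(x,f_\theta(x))-g(x,f_\phi(x))=\alpha(f_\theta(x)-f_\phi(x))$. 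You instead couple the two distributions through the shared noise $\xi_Y$, write $\Delta R$ as an expected increment of $\ell$ in its second argument under the label shift $\delta(x)$, Taylor-expand, and kill the linear term via the pointwise stationarity of the risk minimizer. Your route is more careful about the exogenous noise --- the paper's step $\mathrm{R}_{f_\theta}(h_{f_\theta}^*)=0$ and its replacement of $y$ by $g(x,f_\phi(x))$ inside the test risk implicitly treat $\xi_Y$ as absent, whereas your difference-of-losses formulation lets the noise cancel cleanly. The price is that you need the residual identity $\partial_2\ell=-\partial_1\ell$ to convert first-argument stationarity into second-argument stationarity; this is an assumption beyond smoothness and strong convexity, but it is of the same character as the normalization the paper's proof silently uses (loss vanishing and stationary on the diagonal), and you are right that without some such structure the linear term survives and only a one-sided statement holds. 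Both arguments, in other words, rest on an unstated regularity of $\ell$ that the proposition should really make explicit; yours has the virtue of flagging it.
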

The extrapolation loss $\Delta R_{f_\theta\rightarrow f_\phi}(h_{f_\theta}^*)$ is zero if and only if either the strength of performativity tends to zero ($\alpha\rightarrow 0$), or the predictions of the two predictors $f_\theta$ and $f_\phi$ 
are identical over the support of $\cD_X$. If this is not the case, an extrapolation gap is inevitable. This elucidates the fundamental hardness of performative prediction from feature, label pairs $(X,Y)$ when performative effects disrupt the causal relationship between $X$ and $Y$. 

The special case where $\alpha=0$ aligns with the assumption of classical supervised learning, in which there is no performativity. This may hold in practice if the predictive model is solely used for descriptive purposes, or if the agent making the prediction does not enjoy any economic power~\citep{hardt22power}.  However, the strength of performative effects is not a parameter we can influence as machine learning practitioners and thus we work under the assumption that any prediction can be performative.

The second special case where the extrapolation error $\Delta R_{f_\theta\rightarrow f_\phi}(h_{f_\theta}^*)$ is small is when $d^2_{\cD_X}(f_\theta,f_\phi)\rightarrow0$. Given our causal model, this implies that $\cD_{XY}(f_\theta)$ and $\cD_{XY}(f_\phi)$ are equal in distribution and hence exhibit the same risk minimizer. Such a scenario where $f_\theta$ and $f_\phi$ are similar can happen, for example, if the model $f_\phi$ is obtained by retraining $f_\theta$ on observational data and a fixed point is reached where $f_\theta=h_{f_\theta}^*$ (also known as performative stability~\citep{perdomo20pp}). The convergence of different policy optimization strategies to stable points has been studied in prior work~\citep[e.g.,][]{perdomo20pp, mendler20stochastic,drusvyatskiy20stochastic} and enabled optimality results even in the presence of performative concept shifts, relying on the target model $f_\phi$ not being chosen arbitrarily, but based on a pre-specified update strategy.

\section{Identifying the causal effect of prediction}
\label{sec:identify}

Having illustrated the hardness of performativity-agnostic learning, we explore under what conditions incorporating the presence of performative predictions into the learning task enables us to recover the transferrable causal mechanism $\cM_Y$ for explaining $Y$.
Towards this goal, a necessary first step is to assume that the mediator $\hat Y$ in Figure~\ref{fig:haty} is observed---the prediction takes on the role of the treatment in our causal analysis and we can not possibly hope to estimate the treatment effect of a treatment that is unobserved.

\subsection{Problem setup}
\label{sec:setup}
Assume we are given access to data points $(x,\hat y,y)$ generated i.i.d.~from the structural causal model in Figure~\ref{fig:haty} under the deployment of a prediction function $f_\theta$. 
From this observational data, we wish to estimate the expected potential outcome of an individual under the deployment of an unseen (but known) predictive model $f_\phi$. We note that given our causal graph, the implication of intervening on the function $f$ can equivalently be explained by an intervention on the prediction $\hat Y$. Thus, we are interested in identifying the causal mechanism:
\begin{equation}
\label{eq:h*}
    \cM_Y(x,\hat y):=\E[Y|X=x,\mathrm{do}(\hat Y = \hat y)].
\end{equation}
Unlike $P(Y|X)$, the mechanism $\cM_Y(x,y)$ is invariant to the changes in the predictive model governing $P(\hat{Y}|X)$.
Thus, being able to identify $\cM_Y$ will allow us to make inferences about the potential outcome surfacing from planned model updates beyond explaining historical data. In particular, we can evaluate $\cM_Y$ to infer the potential outcome $y$ for any $x$ at $\hat{y} = f_\phi(x)$ for $f_\phi$ being the model of interest.

For simplicity of notation, we will write $\cD(f_\theta)$ to denote the joint distribution over $(X,\hat Y,Y)$ of the observed data collected under the deployment of the predictive model $f_\theta$.
We say $\cM_Y$ can be identified, if it can uniquely be expressed as a function of observed data. More formally: 

\begin{definition}[identifiability]
Given a predictive model $f$, the causal graph in Figure~\ref{fig:haty}, and a set of assumptions $A$. We say the causal mechanism $\cM_Y$ is identifiable from $\cD(f)$, if for any function $h$ that complies with assumptions $A$ and $h(x,\hat y)=\cM_Y(x,\hat y)$ for pairs $(x,\hat y)\in\mathrm{supp}(\cD_{X,Y}(f))$  it must also  hold that $h(x,\hat y)=\cM_Y(x,\hat y)$ for all pairs $(x,\hat y)\in\cX\times\cY$.
\end{definition}


Without causal identifiability, there might be other models $h\neq \cM_Y$ that explain the training distribution equally well but do not transfer to the distribution induced by the deployment of a new model. Causal identifiability is crucial for extrapolation and for using $\cM_Y$ to draw conclusions about the outcome under unseen models. It quantifies the limits of what we can infer given access to the training data distribution, ignoring finite sample considerations.

\begin{remark}[Alternate objectives]
Instead of the expected potential outcome $\cM_Y(x,\hat y)$  we might be interested in an alternate causal quantity $\E[\kappa(X,Y,\hat Y)|X=x,\mathrm{do}(\hat Y=\hat y)]$ instead. The function $\kappa$ could measures the loss of predictions, individual improvement, or other goals for socially beneficial machine learning that an auditor or a model designer is interested in. The technical criteria for identifiability of the causal effect established in this work would remain the same, as long as $\kappa$ is a continuous function. 
\end{remark}

\paragraph{Identification with supervised learning.} Identifiability guarantees of $\cM_Y$ from samples of $\cD(f_\theta)$ imply that the historical data collected under the deployment of $f_\theta$ contains sufficient information to recover the invariant relationship~\eqref{eq:h*}. As a concrete identification strategy, consider the following standard variant of supervised learning that takes in samples $(x,\hat y,y)$ and builds a meta-model that predicts $Y$ from $X,\hat Y$ by solving the following risk minimization problem
\begin{equation}
    h_\text{SL}:=\argmin_{h\in \mathcal H} \;\mathrm E_{(x,\hat y, y)\sim\cD(f_\theta)} \big[
    \left(h(x,\hat y)-y \right)^2\big].\label{eq:obj}
\end{equation}
where $\cH$ denotes the hypothesis class. We consider the squared loss for risk minimization because it pairs well with the exogeneous noise $\xi_Y$ in \eqref{eq:SEMa-1} being additive and zero mean. The optimization strategy~\eqref{eq:obj} is an instance of what we term \emph{predicting from predictions}. Lemma~\ref{lem:equal} provides a sufficient condition for the supervised learning solution $h_\text{SL}$ to recover the invariant causal quantity $\cM_Y$.

\begin{lemma}[Identification strategy]
\label{lem:equal}
Consider the data generation process in Figure~\ref{fig:haty} and a set of assumptions $A$. Given a hypothesis class $\cH$ such that every $h\in\cH$ complies with $A$ and the problem is realizable, i.e., $\cM_Y\in\cH$. Then, if $\cM_Y$ is causally identifiable from $\cD(f_\theta)$ given $A$, the risk minimizer $h_\text{SL}$ in \eqref{eq:obj} will coincide with $\cM_Y$.
\end{lemma}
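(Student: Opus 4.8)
The plan is to prove the claim in three linked steps: first reduce the squared-loss minimization in \eqref{eq:obj} to matching the observational regression function on the training support; then use the additive-noise structure of Figure~\ref{fig:haty} to identify that regression function with $\cM_Y$ on the support; and finally invoke the identifiability hypothesis to upgrade on-support agreement to agreement on all of $\cX\times\cY$.

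First I would write $m(x,\hat y):=\E[Y\mid X=x,\hat Y=\hat y]$ for the observational regression function and recall the bias--variance decomposition of the population squared risk,
\[
\E_{(x,\hat y,y)\sim\cD(f_\theta)}\big[(h(x,\hat y)-y)^2\big]
=\E\big[(h(X,\hat Y)-m(X,\hat Y))^2\big]+\E\big[(m(X,\hat Y)-Y)^2\big],
\]
where the cross term vanishes because $\E[\,Y-m(X,\hat Y)\mid X,\hat Y\,]=0$ and $h-m$ is $(X,\hat Y)$-measurable. Since the second summand does not depend on $h$, every minimizer $h_\text{SL}$ of \eqref{eq:obj} must annihilate the first summand, i.e.\ $h_\text{SL}(x,\hat y)=m(x,\hat y)$ for $\cD(f_\theta)$-almost every $(x,\hat y)$, hence on $\mathrm{supp}(\cD_{X,Y}(f_\theta))$.

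Next I would identify $m$ with $\cM_Y$ on this support using the structural equations. Because $Y=g(X,\hat Y)+\xi_Y$ with $\xi_Y$ exogenous and zero mean, the observational conditional expectation reduces to $m(x,\hat y)=g(x,\hat y)$ wherever it is defined. On the interventional side, $\mathrm{do}(\hat Y=\hat y)$ deletes the incoming edges to $\hat Y$ but leaves the mechanism for $Y$ untouched, and there is no back-door path from $\hat Y$ to $Y$ once we condition on $X=x$; hence $\cM_Y(x,\hat y)=\E[g(x,\hat y)+\xi_Y]=g(x,\hat y)$ as well. Thus $m$ and $\cM_Y$ coincide on $\mathrm{supp}(\cD_{X,Y}(f_\theta))$. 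Combined with the previous step, and using realizability $\cM_Y\in\cH$ to guarantee that the infimum in \eqref{eq:obj} is attained at the value of $\cM_Y$, this shows $h_\text{SL}$ and $\cM_Y$ agree on the support.

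Finally, since $h_\text{SL}\in\cH$ complies with $A$ and agrees with $\cM_Y$ on $\mathrm{supp}(\cD_{X,Y}(f_\theta))$, the definition of identifiability applies verbatim and gives $h_\text{SL}(x,\hat y)=\cM_Y(x,\hat y)$ for all $(x,\hat y)\in\cX\times\cY$. The only delicate point is the gap between what risk minimization can constrain and what the statement demands: the squared loss pins $h_\text{SL}$ down only on the training support, leaving off-support behaviour free. The identifiability hypothesis is precisely the device that closes this gap, so I expect the crux of the argument to be recognizing that the first two steps produce exactly the on-support equality needed to trigger the identifiability definition; the rest is bookkeeping.
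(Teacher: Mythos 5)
Your proposal is correct and follows essentially the same route as the paper's proof: the squared-loss minimizer recovers the observational regression function $\E[Y\mid X,\hat Y]$ (via realizability), the absence of unobserved confounding in Figure~\ref{fig:haty} equates this with $\cM_Y$ on the training support, and the identifiability assumption lifts on-support agreement to all of $\cX\times\cY$. Your write-up is merely more explicit than the paper's about the on-support versus off-support distinction, which is a point the paper's one-line argument glosses over.
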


\subsection{Challenges for identifiability}
The main challenge for identification of $\cM_Y$ from data is that in general, the prediction rule $f_\theta$ which produces $\hat Y$ is a deterministic function of the covariates $X$. This means that, for any realization of $X$, we only get access to one particular $\hat Y=f_\theta(X)$ in the training distribution, which makes it challenging to disentangle the direct effect of $X$ on $Y$ from the indirect effect mediated by $\hat Y$. To illustrate this challenge, consider the function $h(x,\hat y ):=\cM_Y(x,f_\theta(x))$ that ignores the input parameter $\hat y$ and only relies on $x$ for explaining the outcome. This function explains $y$ in the training data equally well and can not be differentiated from $\cM_Y$ based on data collected under the deployment of a deterministic prediction rule $f_\theta$. The problem is akin to  fitting a linear regression model to two perfectly correlated covariates.   More broadly, this ambiguity is due to what is known as a \emph{lack of overlap} (or lack of positivity) in the literature of causal inference~\citep{pearl1995causal,imbens2015causal}. It persists as long as $P(X|\hat Y=\hat y)$ and $P(X|\hat Y=\hat y')$ in the observed distribution do not have common support for pairs of predictions $\hat y, \hat y'$ we are potentially interested in.  In the covariate shift literature, the lack of overlap surfaces when the covariate distribution violates the common support assumption and the propensity scores are not well-defined ~(see e.g., \citet{pan10transfer}). This problem renders causal identification and thus data-driven learning of performative effects from deterministic predictions fundamentally challenging.

\begin{proposition}[Hardness of identifiability from deterministic predictions]
\label{propo:deterministic}
Consider the structural causal model in Figure~\ref{fig:haty}. Assume  $Y$ non-trivially depends on $\hat Y$, and the set $\cY$ is not a singleton. Then, given a deterministic prediction function $f$, the causal quantity $\cM_Y$ is not identifiable from $\cD(f)$. 
\end{proposition}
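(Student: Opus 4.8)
The plan is to prove non-identifiability constructively, following the intuition sketched just before the statement: I would exhibit a single alternative mechanism $h \neq \cM_Y$ that (i) complies with the assumptions $A$, (ii) agrees with $\cM_Y$ at every pair $(x,\hat y)$ in the support of the observed distribution $\cD(f)$, yet (iii) differs from $\cM_Y$ at some pair in $\cX\times\cY$. By the identifiability definition, producing even one such $h$ is exactly what it means for $\cM_Y$ to fail to be identifiable from $\cD(f)$. As a first step I would make the target explicit: from the structural equation $Y=g(X,\hat Y)+\xi_Y$ in \eqref{eq:SEMa-1}, with $\xi_Y$ exogenous and zero mean, intervening on $\hat Y$ yields $\cM_Y(x,\hat y)=\E[g(x,\hat y)+\xi_Y]=g(x,\hat y)$, so the causal mechanism coincides with $g$. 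The hypothesis that $Y$ non-trivially depends on $\hat Y$ then translates into $g$ being non-constant in its second argument, i.e.\ there exist $x_0\in\cX$ and $\hat y_1,\hat y_2\in\cY$ (two distinct values exist because $\cY$ is not a singleton) with $g(x_0,\hat y_1)\neq g(x_0,\hat y_2)$.

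Next I would characterize the observed support. Because $f$ is deterministic, $\hat Y=f(X)$ almost surely, so the support of the $(X,\hat Y)$ marginal of $\cD(f)$ is contained in the graph $\{(x,f(x)):x\in\cX\}$ of $f$. I would then define the competing mechanism $h(x,\hat y):=\cM_Y(x,f(x))=g(x,f(x))$, which simply discards its second argument. On the observed support every pair has the form $(x,f(x))$, and there $h(x,f(x))=g(x,f(x))=\cM_Y(x,f(x))$, so $h$ and $\cM_Y$ agree wherever $\cD(f)$ places mass, establishing (ii).

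To establish (iii), I would use the non-trivial dependence: since $g(x_0,\hat y_1)\neq g(x_0,\hat y_2)$, at least one of these two values differs from $g(x_0,f(x_0))$; choosing such a $\hat y^\ast\in\cY$ gives a point $(x_0,\hat y^\ast)\in\cX\times\cY$ at which $h(x_0,\hat y^\ast)=g(x_0,f(x_0))\neq g(x_0,\hat y^\ast)=\cM_Y(x_0,\hat y^\ast)$, so $h\neq\cM_Y$ on $\cX\times\cY$. For (i), I would observe that $h$ is merely the composition of $\cM_Y$ with the map $(x,\hat y)\mapsto(x,f(x))$, so it inherits whatever regularity (e.g.\ continuity) $\cM_Y$ and $f$ possess; hence any natural admissible class $A$ containing $\cM_Y$ also contains $h$, making $h$ a legitimate competitor.

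The argument is structurally short, and I expect the only delicate points to be bookkeeping rather than anything deep. Specifically, one must confirm that the constructed $h$ genuinely lies in the same admissible class as $\cM_Y$ (otherwise it could be excluded as an alternative and the non-identifiability conclusion would not follow), and that the exhibited disagreement point $(x_0,\hat y^\ast)$ really lies off the observed support; both follow from $f$ being deterministic and $\cY$ being non-singleton, but I would state them carefully rather than leave them implicit.
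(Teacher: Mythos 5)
Your proposal is correct and follows essentially the same route as the paper's proof: both construct the competitor $h(x,\hat y)=\cM_Y(x,f(x))$, use determinism of $f$ to show agreement on the support of $\cD(f)$, and invoke the non-trivial dependence of $Y$ on $\hat Y$ together with $\cY$ being non-singleton to exhibit a disagreement point off the support. Your version is a bit more explicit about the bookkeeping (identifying $\cM_Y$ with $g$ and checking that $h$ lies in the admissible class), but the argument is the same.
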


The identifiability issue persists as long as the two variables $X$, $\hat Y$ are deterministically bound and there is no incongruence or hidden structure that can be exploited to disentangle the direct effect of $X$ on $Y$ from the indirect effect mediated by $\hat Y$. In the following, we focus on particularities of prediction problems and show how they allows us to identify $\cM_Y$.

\subsection{Identifiability from randomization}
\label{sec:random}
We start with the most natural setting that provides identifiability guarantees: randomness in the prediction function $f_\theta$. Using standard arguments about overlap we can identify $\cM_Y(x,\hat y)$ for any pair $x,\hat y$ with positive probability in the data distribution $\cD(f_\theta)$ from which the training data is sampled. To relate this to our goal of identifying the outcome under the deployment of an unseen model $f_\phi$ we introduce the following definition:

\begin{definition}[output overlap]
\label{ass:yoverlap}
Given two predictive models $f_\theta, f_\phi$, the model $f_\phi$ is said to satisfy output overlap with $f_\theta$, if for all $x\in\cX$ and any subset $\mathcal Y'\subseteq\mathcal Y$ with positive measure, it holds that
\begin{equation}
\label{eq:ratio}
    \frac{\mathrm P[f_{\phi}(x)\in \mathcal Y']}{\mathrm P[f_\theta(x)\in \mathcal Y']}>0.
\end{equation}
\end{definition}

In particular, output overlap requires the support of the new model's predictions $f_\phi(x)$ to be contained in the support of $f_\theta(x)$ for every potential $x\in\mathcal X$. The following proposition takes advantage of the fact that the joint distribution over $(X,Y)$ is fully determined by the deployed model's predictions to relate output overlap to identification:

\begin{proposition}[Identifiability from output overlap]
\label{prop:dp}
Given the causal graph in Figure~\ref{fig:haty}, the causal quantity $\cM_Y(x,\hat y)$ is identifiable from $\cD(f_\theta)$ for any pair $x,\hat y$ with  $\hat y =f_\phi(x)$, as long as $f_\phi$ is a prediction function that satisfies output overlap with $f_\theta$. 
\end{proposition}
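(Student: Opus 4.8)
The plan is to reduce the interventional target to an ordinary regression on the support of the training distribution, and then to use output overlap to certify that every query point $(x,f_\phi(x))$ falls inside that support.

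First I would read $\cM_Y$ off the structural equations. Applying $\mathrm{do}(\hat Y=\hat y)$ in \eqref{eq:SEMa-1} and conditioning on $X=x$ replaces the random prediction by the fixed value $\hat y$, so that, since $\xi_Y$ is exogenous and zero mean, $\cM_Y(x,\hat y)=\E[g(x,\hat y)+\xi_Y]=g(x,\hat y)$. Next I would argue that on the observed support the interventional quantity coincides with the plain conditional expectation of $Y$ given $(X,\hat Y)$: because $\xi_Y$ is drawn independently of $\xi_X$ and $\xi_f$, it is independent of $(X,\hat Y)$, and hence for every pair $(x,\hat y)$ in the support of $\cD(f_\theta)$,
\[\E_{\cD(f_\theta)}[Y\mid X=x,\hat Y=\hat y]=g(x,\hat y)+\E[\xi_Y\mid X=x,\hat Y=\hat y]=g(x,\hat y)=\cM_Y(x,\hat y).\]
This is the statement that, once we condition on $X$, there is no unmeasured confounding between the treatment $\hat Y$ and the outcome $Y$; it pins down $\cM_Y$ on the observed support as a functional of the observed distribution.

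Second, I would convert output overlap into support containment. By Definition~\ref{ass:yoverlap}, for every $x\in\cX$ any positive-measure set $\cY'$ with $\Pr[f_\phi(x)\in\cY']>0$ must also satisfy $\Pr[f_\theta(x)\in\cY']>0$ in order for the ratio \eqref{eq:ratio} to stay positive; equivalently, the law of $f_\phi(x)$ is absolutely continuous with respect to the law of $f_\theta(x)$, so $\mathrm{supp}(f_\phi(x))\subseteq\mathrm{supp}(f_\theta(x))$. Since $X$ ranges over $\cX$ and $\hat Y\mid X=x$ has support $\mathrm{supp}(f_\theta(x))$ under $\cD(f_\theta)$, every pair $(x,f_\phi(x))$ lies in the support of $\cD(f_\theta)$.

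Finally I would assemble these pieces through the identifiability definition: take any admissible $h$ that agrees with $\cM_Y$ on the observed support; by the second step each query pair $(x,f_\phi(x))$ is in that support, and by the first step $\cM_Y$ equals the observed regression there, so $h(x,f_\phi(x))=\cM_Y(x,f_\phi(x))$, which is exactly the claimed pointwise identifiability. I expect the overlap/coverage step to be the crux — making precise that output overlap forces the target pairs inside the observed support so that agreement on the support already determines $\cM_Y$ at these points; the no-confounding reduction and the reading-off of $g$ from the structural model are routine given the independence and zero mean of $\xi_Y$.
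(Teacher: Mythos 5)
Your proposal is correct and follows essentially the same route as the paper's proof: output overlap forces the query pairs $(x,f_\phi(x))$ into the support of $\cD(f_\theta)$ (the paper's positivity lemma), and the absence of unobserved confounding in Figure~\ref{fig:haty} equates $\E[Y\mid X=x,\hat Y=\hat y]$ with $\E[Y\mid X=x,\mathrm{do}(\hat Y=\hat y)]$ there. You merely spell out two steps the paper leaves implicit --- reading $\cM_Y(x,\hat y)=g(x,\hat y)$ off the structural equations and phrasing overlap as absolute continuity of the law of $f_\phi(x)$ with respect to that of $f_\theta(x)$ --- which is consistent with the paper's own gloss of Definition~\ref{ass:yoverlap}.
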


Proposition~\ref{prop:dp} allows us to pinpoint the models $f_\phi$ to which we can extrapolate to from data collected under $f_\theta$. 
Furthermore, it makes explicit that data collected under the deployment of a fully randomized prediction function $f_\theta$ that attains each value in $\cY$ with non-zero probability for any $x\in\cX$ is ideal for learning and allows for global identification of $\cM_Y$.
Akin to domain randomization for zero-shot transfer learning~\citep{tobin17}, randomization in the prediction gives rise to a dataset that allows for more robust conclusions about the distribution induced by unknown future deployable models $f_\phi$. 
In the context of performative prediction, one natural setting that leads to randomization is the differentially private release of predictions through an additive noise mechanism applied to the output of the prediction function~\citep{dwork06noise}. Here, instead of $\hat Y_\text{orig}=f_\theta(X)$, a noisy version $\hat Y=\hat Y_\text{orig}+\eta$ with $\eta\sim\text{Lap}(0,b)$ for an appropriately chosen $b>0$ is released. Since the Laplace noise has full support, output overlap and identification is guaranteed by Proposition~\ref{prop:dp} for any $f_\phi$. 
Similarly, noise with bounded support would allow for `local' identifiability and extrapolation to models $f_\phi$ that are sufficiently similar in prediction space.

While standard in the literature and natural in certain settings, a caveat of identification from randomization is that there are several reasons a decision-maker may choose not to deploy a randomized prediction function in performative environments, including negative externalities and concerns about user welfare~\citep{KramerGuHa14}, but also business interests to preserve consumer value of the prediction-based service offered.
In the context of our credit scoring example, random predictions would imply that interest rates are randomly assigned to applicants in order to learn how the rates impact their probability of paying back. We can not presently observe this scenario, given regulatory requirements for lending institutions.  Before we turn to scenarios where we can achieve identifiability without randomization of $f_\theta$, we discuss two additional, natural sources of randomness that, combined with side-information, could provide identification.

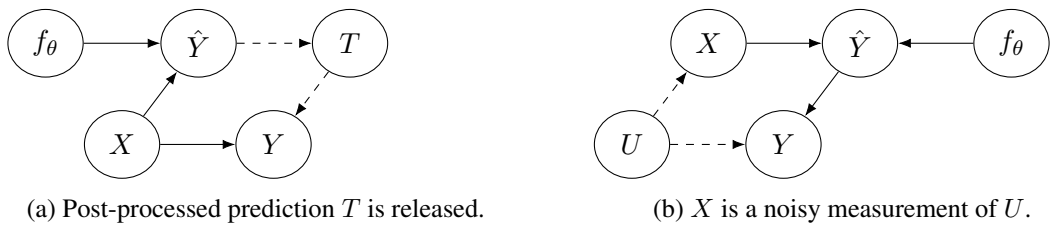
\begin{figure}[t]
    \centering

    \begin{subfigure}[b]{0.4\textwidth}
\begin{tikzpicture}
    \node[state] (x) at (0,0) {$X$};
    \node[state] (y) [right =of x] {$Y$};
    \node[state] (S) [above right =of x, xshift=-0.7cm,yshift=-0.3cm] {$\hat Y$};
    \node[state] (dy) [right =of S] {$T$};
    \node[state] (f) [above left =of x, xshift= 0.7cm,yshift=-0.3cm] {$f_\theta$};
    \path (x) edge (y);
    \path (x) edge (S);
    \path (f) edge (S);
    \path [dashed] (S) edge (dy);
    \path [dashed] (dy) edge (y);
\end{tikzpicture}
\caption{Post-processed prediction $T$ is released.}
\end{subfigure}
\hspace{1cm}
\begin{subfigure}[b]{0.4\textwidth}
\begin{tikzpicture}
    \node[state] (x) at (0,0) {$U$};
    \node[state] (y) [right =of x] {$Y$};
    \node[state] (S) [above right =of x, xshift=-0.7cm,yshift=-0.3cm] {$X$};
    \node[state] (dy) [right =of S] {${\hat{Y}}$};
    \node[state] (f) [right =of dy] {$f_\theta$};
    \path [dashed] (x) edge (y);
    \path [dashed] (x) edge (S);
    \path (f) edge (dy);
    \path (S) edge (dy);
    \path (dy) edge (y);
\end{tikzpicture}
\caption{$X$ is a noisy measurement of $U$.}
\label{fig:U}
\end{subfigure}
\caption{\small{Examples for additional sources of randomness beyond our model.}} %
\label{fig:proxy}
\end{figure}

\paragraph{Alternate sources of randomness in prediction.} If additional side-information, observations, or more fine-grained knowledge about the causal graph structure is available, then identification can also be achieved from other sources of randomness. However, incorporating such side-information requires going beyond standard ERM which is not the main focus of this work. Nevertheless we provide a discussion for completeness. For example, consider the causal graph in Figure~\ref{fig:proxy}(a) where the performative effect of predictions is mediated by a down-stream decision $T\in\{0,1\}$, such that $Y\bigCI\hat Y\,|\,T, X$.  In this case, randomness in the discrete decision function $T$ (instead of the continuous prediction $\hat Y$) is sufficient for identification of the causal graph. 
Randomness in prediction-based decisions can be a deliberate part of an algorithmic system for a number of reasons, including designing individually fair decision rules~\citep{dwork12fairness,berger20random,berger20gender}. 

A second natural source of randomness in performative prediction is noise in the measurement of the covariates $X$, representing the unobserved true underlying attributes $U$. This scenario is illustrated in Figure~\ref{fig:proxy}(b). For example, a student's college performance depends on their underlying scholastic ability, but predictions of performance (and perhaps admissions decisions) are made based on a noisy proxy like SAT score.  In this case, side-information about the structure of the measurement noise enables identification~\citep{eckels20noise} without precise knowledge of $U$. The intuition is that the attributes $U$ that are causal for the outcome $Y$ enter the prediction through the noisy measurements $X$, which adds independent variation to the indirect causal path.  


\subsection{Identifiability through overparameterization}
\label{sec:sep}

The following two sections consider situations where we can achieve identification, without overlap, from data collected under the deployment of a deterministic $f_\theta$. Our first result exploits incongruences in functional complexity arising from machine learning models that are overparameterized, which is common in modern machine learning  applications~\cite[e.g.][]{alex12}. By overparameterization, we refer to the fact that the representational complexity of the model is larger than the underlying concept that needs to be described. We formalize this as follows:

\begin{assumption}[overparameterization]
\label{ass:overparam}
We say a function $f$ is overparameterized with respect to $\mathcal G$ over $\cX$ if there is no function $g'\in\mathcal G$ and $c\in \mathbb{R}$ such that $f(x)=c \cdot g'(x)$ for all $x\in\mathcal X$.
\end{assumption}

For the purpose of this section, assume the structural equation for how $Y$ is generated is separable and has the following form $g(X,\hat Y)= g_1(X)+\beta \hat Y$, where $\hat Y$ is the output of the prediction function $f_\theta$  mapping $X$ to $\hat Y$, and $\beta\geq 0$ is a constant. As we have emphasized earlier, the challenge for identification is that for deterministic $f_\theta$ the prediction can be reconstructed from $X$ without relying on $\hat Y$ and thus the function $h(x,\hat y)=g_1(x)+\beta f_\theta(x)$ can not be differentiated from $\cM_Y$ based on observational  data. For our next identifiability result the key observation is that this ambiguity relies on there being an $h\in\cH$ such that $h(\cdot,\hat y)$ for a fixed $\hat y$ can represent $f_\theta$. In contrast, for prediction functions $f_\theta\notin\cH$, the solution $h_\text{SL}$ (for a well specified $\cH$) will necessarily rely on $\hat Y$ to explain the effect of the prediction. 
To make this intuition more concrete, consider the following example:

\begin{example}
Assume the structural equation for $Y$ in Figure~\ref{fig:haty}  is given as $g(x,\hat y)=\alpha x +\beta \hat y$ for some unknown $\alpha,\beta$. Consider prediction functions $f_\theta$ of the following form $f_\theta(x)=\gamma x^2+\xi x$ for some $\gamma, \xi\geq0$. Consider $\cH$ be the class of linear functions. Then, any consistent estimate $h\in\cH$ takes the form $h(x,\hat y)=\alpha' x +\beta' \hat y$. Furthermore, for $h$ to be consistent with observations we need $ \alpha'+\beta'\xi = \alpha + \beta\xi$ and $\beta'\gamma=\beta\gamma$. This system of equations has a unique solution as long as $\gamma>0$ which corresponds to the case where $f_\theta$ is overparameterized with respect to $\cH$. In contrast, for $\gamma=0$ the function $h(x,\hat y)=(\alpha+\beta\xi) x $ would explain the training data equally well.
\end{example}

The following result generalizes this argument:

\begin{proposition}[Identifiability from overparameterization]
\label{prop:over}
 Consider the structural causal model in Figure~\ref{fig:haty} where $f_\theta$ is a deterministic function. Assume that $g$ can be decomposed as $g(X,\hat Y)=g_1(X)+\alpha \hat Y$ for some $\alpha> 0$ and $g_1\in\cG$, where the function class $\cG$ is closed under addition (i.e. $g_1, g_2\in \cG\Rightarrow a_1\cdot g_1 +a_2 \cdot g_2 \in \cG \quad \forall a_1, a_2\in \mathbb{R}$). Let $\cH$ contain functions that are separable in $X$ and $\hat Y$, linear in $\hat Y$, and $\forall h\in\cH$ it holds that $h(\cdot,\hat y)\in\cG$ for a fixed $\hat y$. Then, if $f_\theta$ is overparameterized with respect to $\mathcal G$ over the support of $\cD_X$, $\cM_Y$ is identifiable from $\cD(f_\theta)$. 
\end{proposition}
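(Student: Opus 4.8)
The plan is to apply the identifiability definition directly: I will show that any $h\in\cH$ which agrees with $\cM_Y$ on $\mathrm{supp}(\cD_{X,Y}(f_\theta))$ must in fact coincide with $\cM_Y$ on all of $\cX\times\cY$. First I would pin down the two objects being compared. Under the structural causal model with $g(X,\hat Y)=g_1(X)+\alpha\hat Y$ and zero-mean $\xi_Y$, the target mechanism is simply $\cM_Y(x,\hat y)=g_1(x)+\alpha\hat y$. Because $f_\theta$ is deterministic, the support of the observed joint law over $(X,\hat Y)$ is the ``graph'' $\{(x,f_\theta(x)):x\in\cX\}$ (with $\cX=\mathrm{supp}(\cD_X)$); this is precisely the locus on which an identifiable candidate $h$ is forced to match.

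Next, I would use the structural restrictions on $\cH$ to normalize the form of a candidate. Since every $h\in\cH$ is additively separable in $X$ and $\hat Y$, linear in $\hat Y$, and satisfies $h(\cdot,\hat y)\in\cG$ for fixed $\hat y$, I can write $h(x,\hat y)=h_1(x)+\lambda\hat y$ with $\lambda\in\R$ and $h_1\in\cG$ (extracting $h_1$ from the appropriate $\hat y$-slice and invoking closure of $\cG$). Imposing the matching constraint on the support gives $h_1(x)+\lambda f_\theta(x)=g_1(x)+\alpha f_\theta(x)$ for all $x\in\cX$, which rearranges to the single scalar functional equation $(h_1-g_1)(x)=(\alpha-\lambda)f_\theta(x)$. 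Closure of $\cG$ under linear combinations yields $\delta:=h_1-g_1\in\cG$.

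The crux is then a dichotomy on $\alpha-\lambda$. If $\alpha\neq\lambda$, then $f_\theta=(\alpha-\lambda)^{-1}\delta$ exhibits $f_\theta$ as a scalar multiple of an element of $\cG$ over $\cX$, contradicting the overparameterization hypothesis on $f_\theta$ (Assumption~\ref{ass:overparam}). Hence $\lambda=\alpha$, and substituting back forces $\delta\equiv 0$ on $\cX$, i.e.\ $h_1=g_1$ on $\cX$. Combining $\lambda=\alpha$ with $h_1=g_1$ on $\cX$ gives $h(x,\hat y)=g_1(x)+\alpha\hat y=\cM_Y(x,\hat y)$ for every $(x,\hat y)\in\cX\times\cY$, which is exactly the required identifiability.

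The only real obstacle is organizational rather than technical: ensuring that the three structural conditions on $\cH$ genuinely collapse each candidate to the one-parameter-plus-$\cG$ form $h_1+\lambda\hat y$, so that the deterministic-support constraint reduces to the clean equation $(h_1-g_1)=(\alpha-\lambda)f_\theta$; once that reduction is in place, the overparameterization assumption is precisely calibrated to eliminate the off-diagonal case $\alpha\neq\lambda$. I would also briefly note that the degenerate possibility $f_\theta\equiv 0$ is already ruled out by overparameterization (it would read as $f_\theta=0\cdot g'$), so no separate case is needed.
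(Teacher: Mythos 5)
Your proposal is correct and follows essentially the same route as the paper's proof: both reduce the matching constraint on the graph of $f_\theta$ to the functional equation $h_1(x)-g_1(x)=(\alpha-\lambda)f_\theta(x)$, use closure of $\cG$ under linear combinations to place the left-hand side in $\cG$, and invoke Assumption~\ref{ass:overparam} to rule out $\lambda\neq\alpha$, after which $h_1=g_1$ follows. Your write-up is, if anything, slightly cleaner in stating the dichotomy directly and in noting that $f_\theta\equiv 0$ is excluded by overparameterization, but there is no substantive difference in the argument.
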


The above result can be extended to more general structural causal models of the form $g(X,\hat Y) = g_1(X)+g_2(\hat Y)$. In this case linear independence between $g_1$ and $g_2\circ f_\theta$ is needed for identification. This is achieved if the model is overparameterized, and, in addition, we can ensure that $g_2\circ f_\theta$ remains sufficiently complex. 
As a concrete instantiation where this is the case, we could have $g_1,g_2\in\mathcal G$ with $\cG$ being the class of degree $k$ polynomials, and $f_\theta$ being of degree $k'>k$. 
More generally, in practical settings with overparameterized models, we expect incongruence to persist beyond the linear setting. In particular, there is no reason to believe that there is any structural similarity in the structural relationship between features and label, and the nature of performative effects. Thus, it is reasonable to assume that $g_2\circ f_\theta$ inherits the complexity of $f_\theta$.

\subsection{Identifiability from classification}

A second ubiquitous source of incongruence that we can exploit for identification is the \emph{discrete} nature of predictions $\hat Y$ in the context of classification. The resulting discontinuity in the relationship between $X$ and $\hat Y$ enables us to disentangle the direct causal link between $X$ and $Y$ from the indirect link mediated by the prediction $\hat Y$.
This identification strategy is akin to the popular regression discontinuity design~\citep{lee10rdd} and relies on the assumption that all other variables in $X$  are continuously related to $Y$ around the discontinuities in $\hat Y$.
Together with the separability of the structural causal model, we can establish the following global identifiability result:

\begin{proposition}[Identifiability for discrete classification]\label{prop:discrete}
Assume that the effect of $X$ and $\hat{Y}$ on $Y$ are separable $g(X,\hat Y)= g_1(X) + g_2(\hat{Y}), \forall X, \hat{Y}$ for some differentiable functions $g_1$ and $g_2$. Further, suppose $X$ is a  continuous random variable and $\hat{Y}$ is a discrete random variable that takes on at least two distinct values with non-zero probability. Then, $\cM_Y$ is identifiable from observational data. 
\end{proposition}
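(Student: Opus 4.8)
The plan is to check the identifiability definition head-on. I take an arbitrary candidate $h$ that complies with the assumptions $A$ in force here --- namely $h$ is separable and differentiable, $h(x,\hat y)=h_1(x)+h_2(\hat y)$ --- and that agrees with $\cM_Y$ on $\mathrm{supp}(\cD_{X,Y}(f_\theta))$, and I aim to show $h\equiv\cM_Y$ on all of $\cX\times\cY$. First I would record the target: since $\xi_Y$ is zero mean and $g$ is separable, $\cM_Y(x,\hat y)=g_1(x)+g_2(\hat y)$, whereas the observational distribution only exposes the ``diagonal'' $m(x):=\E[Y\mid X=x]=g_1(x)+g_2(f_\theta(x))$, because the deterministic classifier $f_\theta$ pairs each $x$ with the single label $f_\theta(x)$. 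Matching $h$ to $\cM_Y$ on the support then gives, on each decision region $\cX_c:=f_\theta^{-1}(c)$ for a label $c$ attained with positive probability, the identity $h_1(x)+h_2(c)=g_1(x)+g_2(c)$ for all $x\in\cX_c$. Hence $\psi:=h_1-g_1$ is constant on $\cX_c$, equal to $\delta_c:=g_2(c)-h_2(c)$.

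The heart of the argument is to show that all of the region-wise constants $\delta_c$ coincide, and this is exactly where discreteness of $\hat Y$ meets continuity of the covariate effect, in the spirit of regression discontinuity. Because $X$ is continuous and $f_\theta$ attains at least two values with positive probability, the regions $\{\cX_c\}$ partition the support $\cX$ into at least two pieces whose closures meet along the decision boundaries. Both $g_1$ and $h_1$ are differentiable, so $\psi$ is continuous on $\cX$; yet $\psi$ takes values only in the at most countable set $\{\delta_c\}$. A continuous function on a connected set has connected image, and the only at most countable connected subsets of $\R$ are singletons, forcing $\psi\equiv\delta$ for one constant $\delta$. Equivalently and more concretely, at a boundary point $x_0$ shared by $\cX_c$ and $\cX_{c'}$, approaching $x_0$ from either side and using continuity of $\psi$ gives $\delta_c=\psi(x_0)=\delta_{c'}$: the covariate contribution $g_1$ varies continuously across the jump in $\hat Y$, so the observed discontinuity of $m$ identifies $g_2(c)-g_2(c')$ and leaves no slack in the constants.

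With $\psi\equiv\delta$ I obtain $h_1=g_1+\delta$ on all of $\cX$ and $h_2(c)=g_2(c)-\delta$ for every attained label $c$, so $h(x,c)=h_1(x)+h_2(c)=g_1(x)+g_2(c)=\cM_Y(x,c)$ for all $x\in\cX$ and all such $c$; since in the classification setting $\cY$ is precisely the finite set of attained labels, this is the desired conclusion $h\equiv\cM_Y$ on $\cX\times\cY$. I expect the main obstacle to be the topological bookkeeping in the second step: making rigorous that the decision regions tile a connected domain and share genuine boundary limit points, so that a single constant propagates across every region. If $\cX$ were disconnected, or a label were attained only on an isolated component, the $\delta_c$ could legitimately differ and identifiability would break --- which is why the continuity-around-discontinuities assumption, and implicitly connectedness of the covariate support, is doing real work. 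I would also flag that the ``at least two distinct values'' hypothesis is essential: with a single attained value there is no jump, $g_2$ is pinned down at only one point, and the $\hat Y$-dependence of $\cM_Y$ is unconstrained off the support.
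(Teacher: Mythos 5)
Your proof is correct and verifies the paper's uniqueness-style definition of identifiability directly, whereas the paper's own proof is constructive: it observes that for any observable pair $(x,\hat y')$ with $\hat y'=f_\theta(x)$ one has $\frac{\partial}{\partial x}\E[Y\mid \hat Y=\hat y',X=x]=\frac{\partial}{\partial x}g_1(x)$ (because $g_2(\hat y)$ is piecewise constant in $x$), so the derivative of $g_1$ is identified pointwise from the data, and then writes $\cM_Y(x,\hat y)=\E[Y\mid \hat Y=\hat y,X=x']+\int_{x'}^{x}\frac{\partial}{\partial x}\E[Y\mid \hat Y=\hat y',X=x]\,\dd x$ for an anchor point $x'$ in the decision region of $\hat y$. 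The two arguments are dual: the paper identifies $g_1$ up to a constant by integration and pins the constant at an observed anchor, while you show that the difference $h_1-g_1$ between any two data-consistent candidates is locally constant on decision regions and continuous, hence globally constant by connectedness, and that this constant cancels in the sum $h_1+h_2$. Your route is arguably better aligned with the paper's Definition~1 and makes explicit two things the paper's integral formula only uses implicitly: that the covariate support must be connected (so the constant propagates across regions, or equivalently so the integration path stays where $g_1'$ is identified), and that $h_2$ is only pinned down at labels actually attained with positive probability. The paper's route, in exchange, yields an explicit plug-in estimand rather than just a uniqueness certificate. Both share the same key insight --- the regression-discontinuity-style interplay between the continuity of $g_1$ and the discreteness of $\hat Y$ --- so neither is more general than the other in substance.
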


Similar to Proposition~\ref{prop:over}, the separability assumption together with incongruence provides a way to separate the direct effect from the indirect effect of $X$ on $Y$. Separability is necessary in order to achieve global identification guarantees without randomness because the identification of entangled components without overlap is fundamentally hard. Thus, under violations of the separability assumptions, a regression discontinuity design only enables approximate identification of the causal effect locally around the discontinuity by comparing similar units right above and right below the threshold that obtained a different prediction. This means that reliable extrapolation away from the threshold is not possible without further assumptions. 

In general, the further from $f_\theta$ we aim to extrapolate, the more we rely on assumptions, and the more brittle our causal conclusions become to violations of that said assumptions. Akin to Section~\ref{sec:random} (if one can not be confident about the overlap being satisfied on all of $\hat Y$ for every $X$), we recommend being cautious when relying on supervised learning approaches to reason about the impact of substantial updates to the predictive model, even if we put aside concerns about data scarcity. Rather, we would recommend considering data-driven predictions as a tool to inform local updates to the predictive model in the context of gradual exploration so as to stay within a suitably chosen trust region around $f_\theta$.

\section{Empirical evaluation}
The three settings studied in the previous section described several natural scenarios where we can hope to answer the causal question outlined in Section~\ref{sec:setup} with a model learned using supervised learning. 
In this section, we investigate empirically how well the supervised learning solution $h_\text{SL}$ in~\eqref{eq:obj} is able to identify a transferable functional relationship with finite data. 

\paragraph{Methodology.}
We generated semi-synthetic data for our experiments, using a Census income prediction dataset from \url{folktables.org} \citep{ding2021retiring}.\footnote{Appendix~\ref{appendix:exp} contains additional experiments on other Census datasets and the Kaggle credit scoring dataset \citep{kaggle}, along with more experimental details.}  Using this dataset as a starting point, we simulate a training dataset and test dataset with distribution shift as follows:
First, we choose two different predictors $f_\theta$ and $f_\phi$ to predict a target variable of interest (e.g. income) from covariates $X$ (e.g. age, occupation, education, etc.). If not specified otherwise, $f_\theta$ is fit to the original dataset to minimize squared error, while $f_\phi$ is trained on randomly shuffled labels. Next, we posit a function $g$ for simulating the performative effects. Then, we generate a \emph{training} dataset of $(X, \hat{Y}, Y)$ tuples following the structural causal model in Figure \ref{fig:haty}, using the covariates $X$ from the original data, $g,$ and $f_\theta$ to generate $\hat Y$ and $Y$.  Similarly, we generate a \emph{test} dataset of $(X, \hat{Y}, Y)$ tuples, using $X, g, f_\phi$.
We assess how well supervised methods learn transferable functional relationships by fitting a model $h_\text{SL}$ to the training dataset and then evaluating the root mean squared error (RMSE) for regression and the accuracy for classification on the test dataset. In our evaluations we compare predicting from predictions ($\hat Y$ included as a feature) with performative-agnostic learning ($\hat Y$ not included as a feature). We visualize the standard error from 10 replicates with different random seeds and we include an in-distribution baseline trained and evaluated on samples of $\cD(f_\phi)$.

\begin{figure}[t]
    \captionsetup{font=footnotesize}
    \centering
    \begin{subfigure}[b]{0.242\textwidth}
            \centering
            \includegraphics[width=\textwidth]{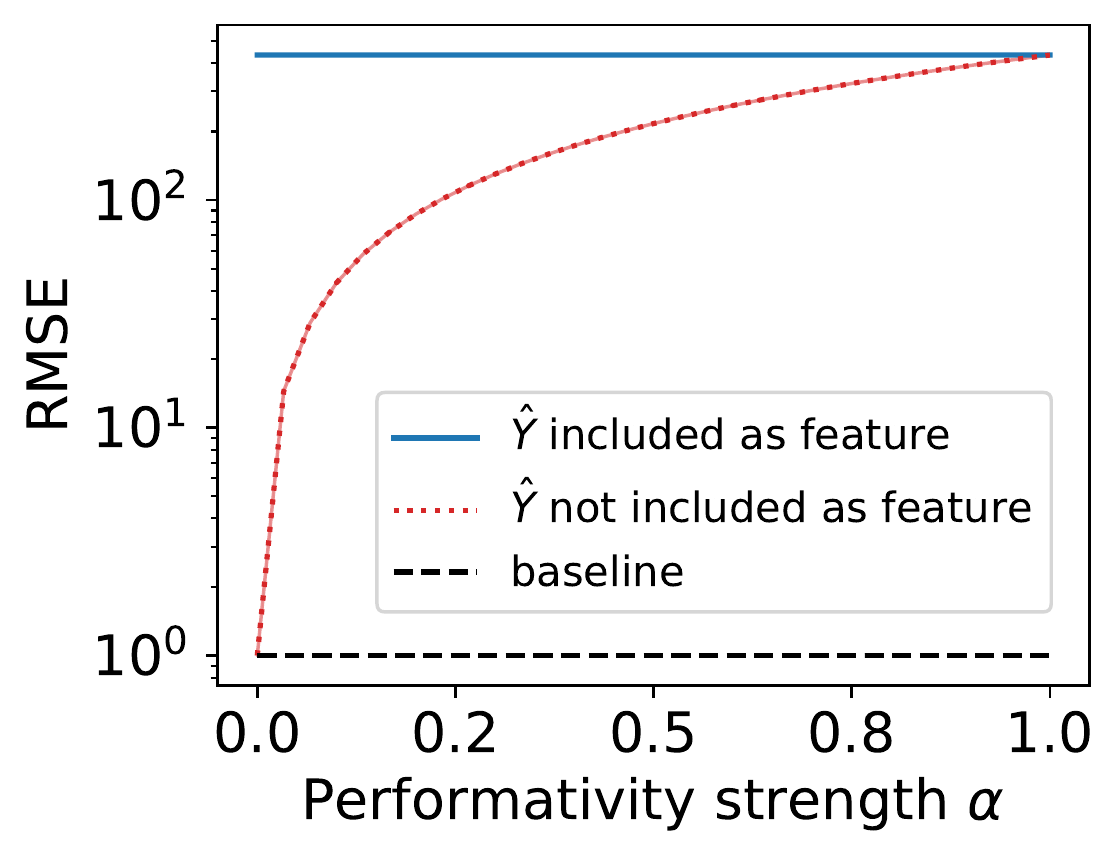}
            \caption{Non-identifiable setting}
        \end{subfigure}
        \hfill
        \begin{subfigure}[b]{0.242\textwidth}
            \centering
            \includegraphics[width=\textwidth]{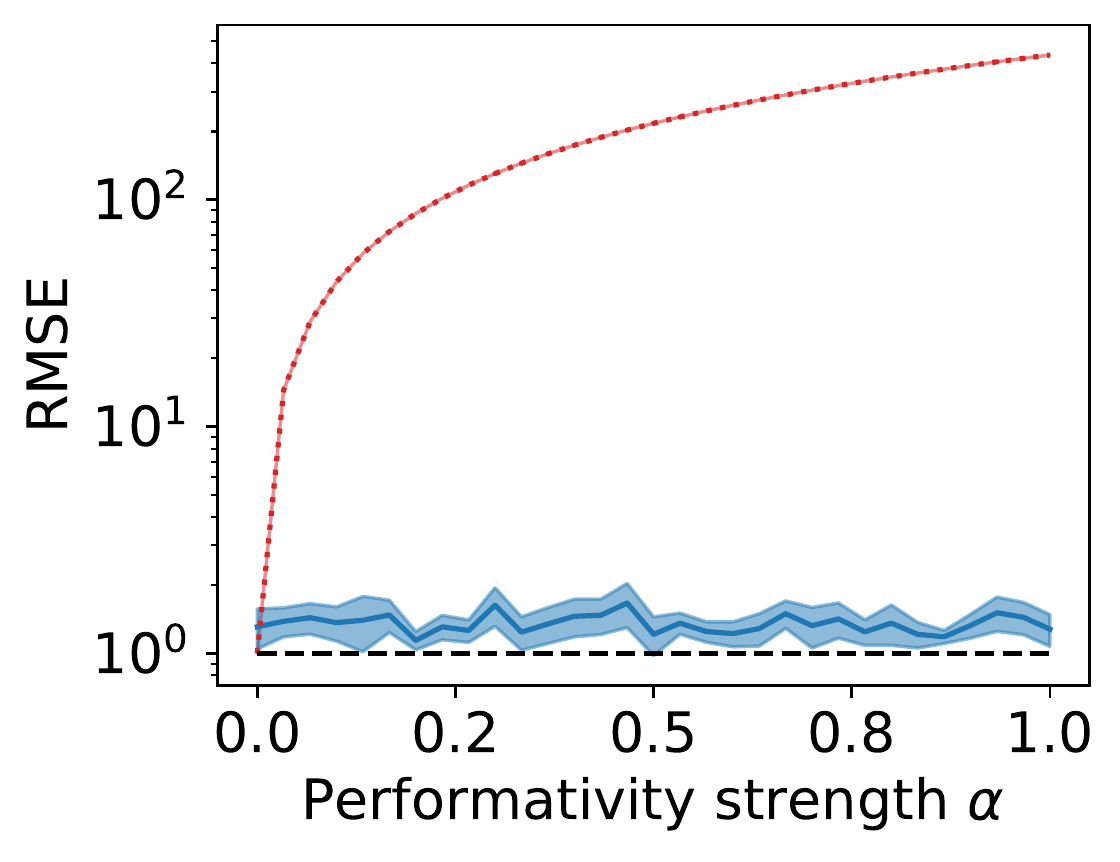}
            \caption{randomized $f_\theta$}
        \end{subfigure}
        \hfill
        \begin{subfigure}[b]{0.242\textwidth}
            \centering
            \includegraphics[width=\textwidth]{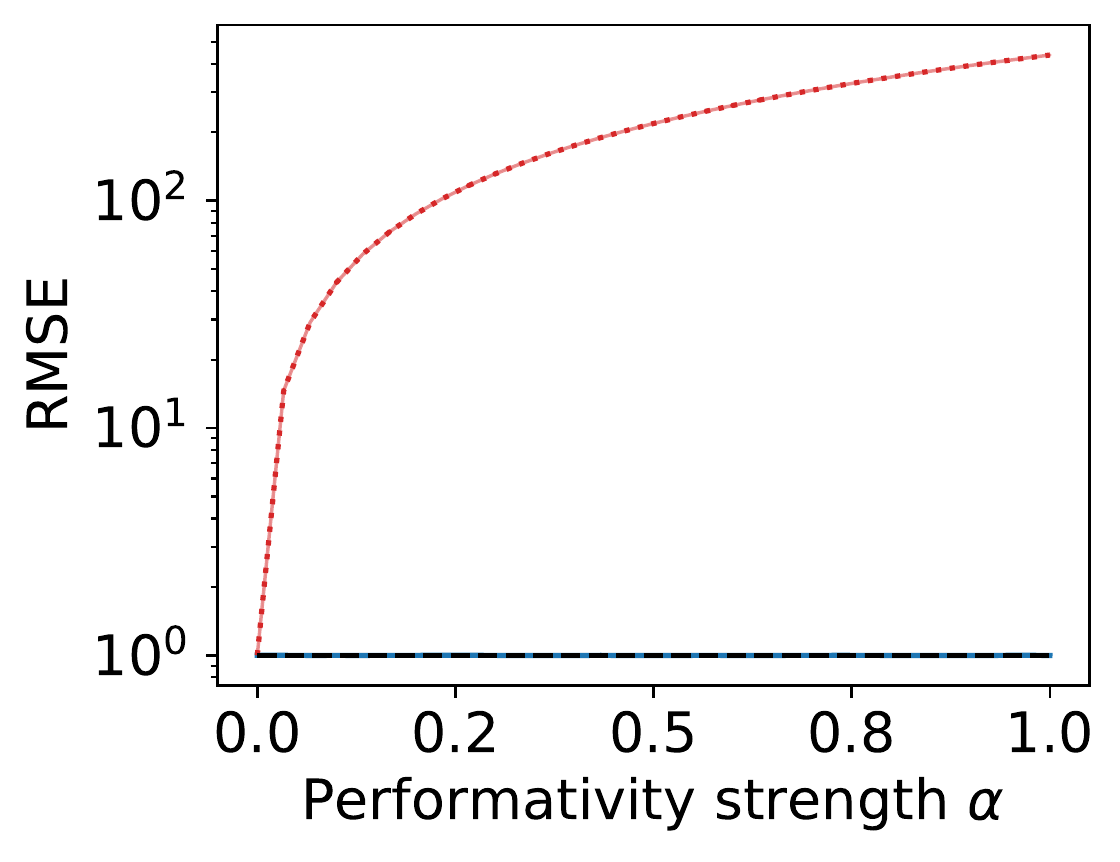}
            \caption{overparameterized $f_\theta$}
        \end{subfigure}
        \hfill\begin{subfigure}[b]{0.242\textwidth}
            \centering
            \includegraphics[width=\textwidth]{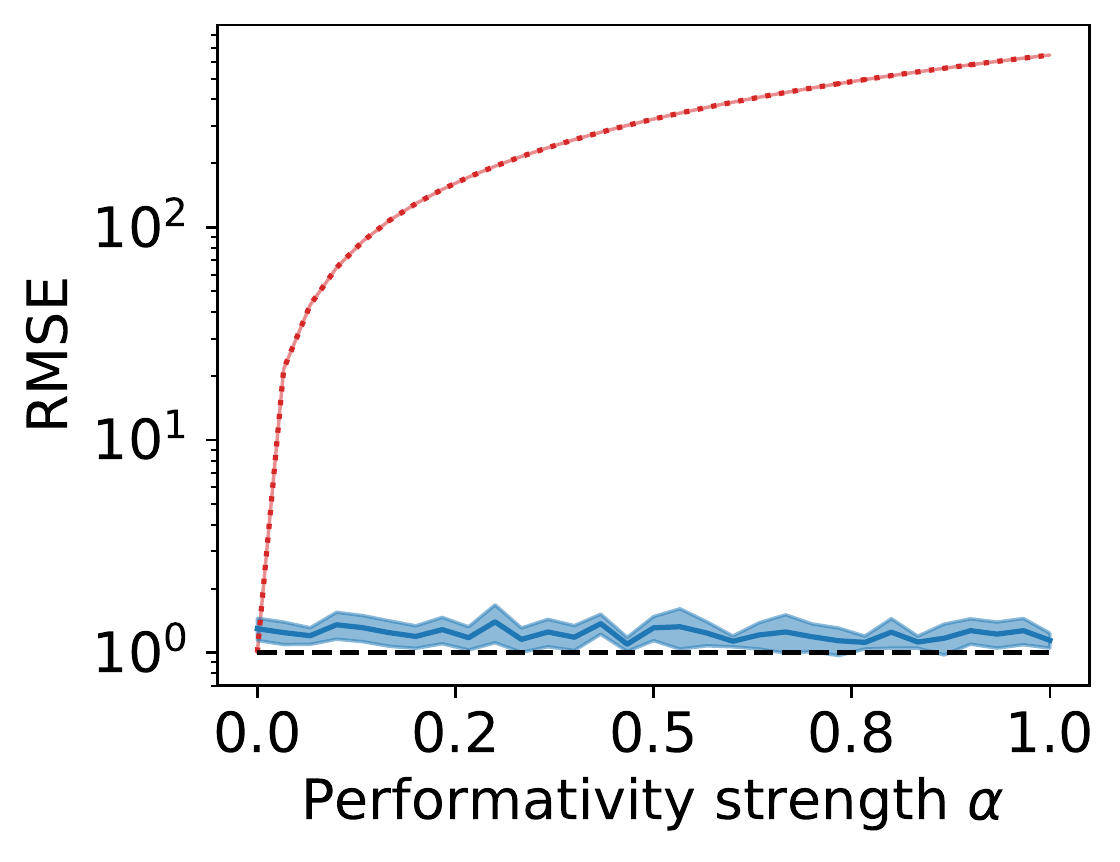}
            \caption{discrete $f_\theta$}
        \end{subfigure}
    \caption{\textbf{Extrapolation error of supervised learning with and without access to $\hat Y$.} (a) In the non-identifiable setting, adding $\hat Y$ as a feature harms generalization performance. (b)-(d) Randomization, overparameterization, and discrete predictions are each sufficient for avoiding this failure mode. Supervised learning obtains models robust to distribution shift when $\hat Y$ is given as a feature, while the extrapolation loss of the performativity-agnostic model grows with the strength of performativity.} 
    \label{fig:large_data}
\end{figure}

\subsection{Necessity of identification guarantees for supervised learning}
\label{sec:exp-random}
We start by illustrating why our identification guarantees are crucial for supervised learning under performativity.
Therefore, we instantiate the structural causal model in Figure~\ref{fig:haty} as
\begin{equation}g(X,\hat Y)=\beta^\top X + \alpha\hat Y
\label{eq:exp-g}
\end{equation}
with $\xi_Y\sim\cN(0,1)$. The coefficients $\beta$ are determined by linear regression on the original dataset. The hyperparameter $\alpha\geq 0$ quantifies the stength of  performativity that we vary in our experiments. The predictions $\hat Y$ are generated from a linear model $f_\theta$ that we modify to illustrate the resulting impact on identifiability. We optimize $h_\text{SL}$ in \eqref{eq:obj} over $\cH$ being the class of linear functions and  assume there are plenty of training data points ($N=200,000$) available.

We start by illustrating a failure mode of supervised learning in a non-identifiability setting (Proposition~\ref{propo:deterministic}).
Therefore, we let $f_\theta$ be a deterministic linear model fit to the base dataset ($f_\theta(X)\approx \beta^\top X$). This results in $\cM_Y$ not being identifiable from $\cD(f_\theta)$.  
In Figure~\ref{fig:large_data}(a) we can see that supervised learning indeed struggles to identify a transferable functional relationship from the training data. What we observe in the experiment is that the meta model returns $h_\text{SL}(X,\hat Y)=(1+\alpha)\hat Y$, instead of identifying $\cM_Y$ correctly as $g(X,\hat Y)$. Thus, this relationship is not preserved for our test model $f_\phi$, which leads to a high extrapolation error independent of the strength of performativity. While we only show the error for one $f_\phi$ in Figure~\ref{fig:large_data}(a), the error grows with the distance $d^2_{\cD_x}(f_\theta,f_\phi)$ between the training domain $\cD(f_\theta)$ and the target domain $\cD(f_\phi)$. In contrast, when the feature $\hat{Y}$ is not included, the supervised learning strategy returns $h_\text{SL}(X)=(1+\alpha)\beta^\top X$. The extrapolation loss of this performativity-agnostic model scales with the strength of performativity (c.f. Proposition~\ref{propo:ext}) and is thus strictly smaller than the error of the model that predicts from predictions in this example.

Once we leave the non-identifiable setting and move into the regime of our identification results (Proposition~\ref{prop:dp}-\ref{prop:discrete}), the benefit of including $\hat Y$ as a feature becomes apparent. To illustrate this, we reuse the same setup but modify the way the predictions in the training data are generated. In Figure~\ref{fig:large_data}(b) we use additive Gaussian noise to determine the predictions as $\hat Y=f_\theta(X)+\eta$ with $\eta\sim\cN(0,1)$. In Figure~\ref{fig:large_data}(c) we augment the input to $f_\theta$ with second-degree polynomial features to achieve overparameterization. In Figure~\ref{fig:large_data}(d) we round the predictions of $f_\theta$ to obtain discrete values. In all three cases, including $\hat Y$ as a feature is beneficial and allows the model to match in-distribution accuracy baselines, closing the extrapolation gap that is inevitable for performativity-agnostic learning.

\begin{figure}[t!]
\captionsetup{font=footnotesize}
    \centering
    \begin{subfigure}[b]{0.34\textwidth}
            \centering 
                \includegraphics[width=\textwidth]{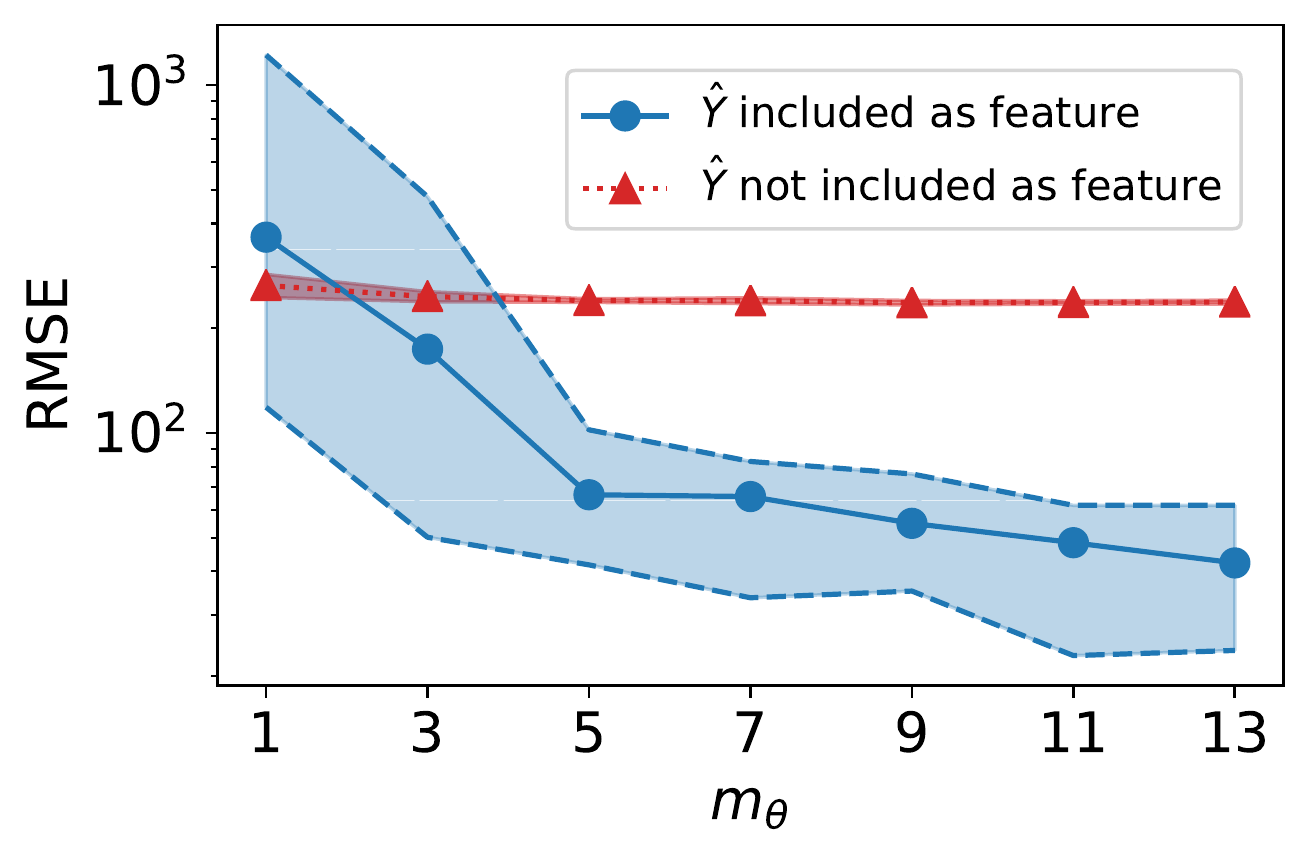}
            \caption{Varying functional complexity of $f_\theta$ }
        \end{subfigure}\hspace{1cm}
\begin{subfigure}[b]{0.34\textwidth}
            \centering
            \includegraphics[width=\textwidth]{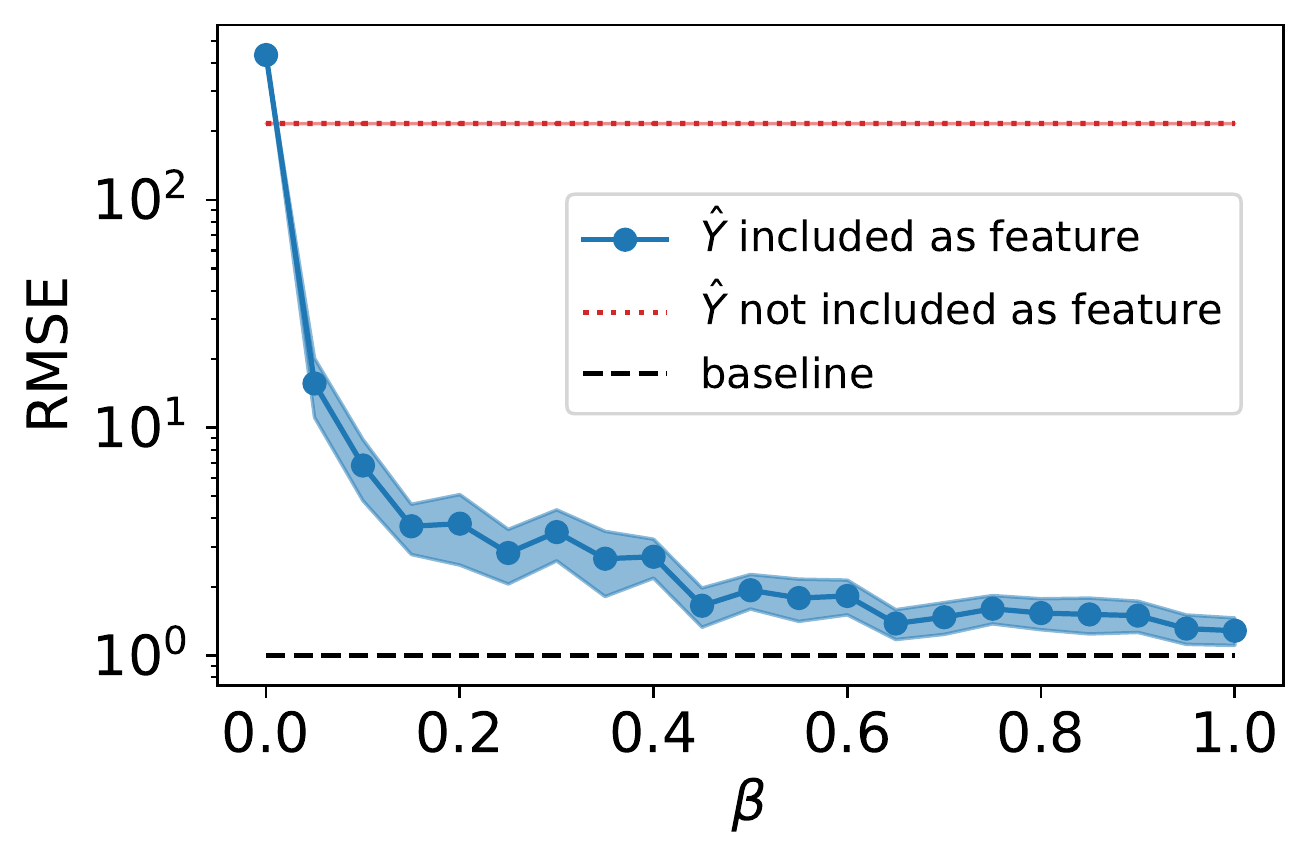}
            \caption{Varying degree of noise in $f_\theta$.}
        \end{subfigure}
        \caption{\textbf{Extrapolation performance with varying degrees of randomness and incongruence.} (a) We vary $m_\theta$ (the number of units in the hidden layer) of $f_\theta$.  Adding $\hat Y$ as a feature helps as soon as $f_\theta$ is overparameterized with respect to $g_1$. (b) We vary the magnitude of noise in the predictions of $f_\theta$. A small amount of noise is sufficient for identifiability.  Confidence sets show maximum and minimum across $10$ runs.
        }
        \label{fig:vary}
\end{figure}

\subsection{Strength of incongruence}

We next conduct an ablation study and investigate how the degree of overparameterization and the noise level for randomized $f_\theta$ impacts the extrapolation performance of supervised learning. Therefore, we consider the following instantiation of the structural equation model in Figure~\ref{fig:haty}:
\begin{equation}g(X,\hat Y)=g_1(X) + \alpha\hat Y
\label{eq:exp-gmain}
\end{equation}
with $\xi_Y\sim\cN(0,1)$. We fix the level of performativity at $\alpha=0.5$ for this experiment. We optimize $h_\text{SL}$ in \eqref{eq:obj} over $\cH$ (which we vary) and assume there are plenty of training data points ($N=200,000$) available.

\paragraph{Degree of overparameterization.} First, we explore the effect of overparameterization on the extrapolation error of $h_\text{SL}$. Therefore, we choose fully connected neural networks with a single hidden layer to represent the functions $g_1$, $f_\theta$, and $h_\text{SL}$. For the function $g_1$ and the hypothesis class $\cH$ we take a neural network with $m_g=3$ units in the hidden layer. We fit $g$ to the original dataset. Then, to simulate the degree of overparameterization of  $f_\theta$, we vary the number of neurons in the hidden layer of $f_\theta$, denoted $m_\theta$.
The resulting extrapolation performance of $h_\text{SL}$ on the test distribution is shown in Figure~\ref{fig:vary}(a). We can see how the extrapolation error of the learned model decreases with the complexity of $f_\theta$. In particular, as soon as $m_\theta>m_\phi$ there is a significant benefit to adding $\hat Y$ as a feature to the meta model. This corresponds to the regime where $\cM_Y$ becomes identifiable and $h_\text{SL}$ successfully recovers the transferable functional relationship in \eqref{eq:exp-gmain} as Proposition~\ref{prop:over} suggests. In turn, without adding $\hat Y$ as a feature the model suffers an inevitable extrapolation gap due to a concept shift that is independent of the properties of $f_\theta$.

\paragraph{Magnitude of noise.} In our second experiment on incongruence, we investigate the effect of the magnitude of additive noise added to the predictions in the linear model setting shown in Figure~\ref{fig:large_data}(b). Here $\cH$ and $g_1$ are linear functions and we vary the level of noise added to the predictions $f_\theta$. More specifically, we have  $\hat Y=f_\theta(X)+\beta \eta$ with $\eta\in\cN(0,1)$ where we vary $\beta$. The corresponding results can be found in Figure~\ref{fig:vary}(b). We see that even small amounts of noise are sufficient for identification and adding $\hat Y$ as a feature to our meta-machine learning model is effective as soon as the noise in $f_\theta$ is non-zero.

\subsection{Learning with finite data}

Recall that causal identification results are feasibility guarantees. They imply that $\cM_Y$ can be recovered from observational data in the limit of infinite data. However, in practical settings, we only get access to a finite set of data points from the training distribution $\cD(f_\theta)$. In the following, we show that supervised learning can successfully learn transferable functions $h_\text{SL}$ with only a few training data points, given that our identifiability conditions are satisfied. 

In Figure~\ref{fig:finite_data}(a)-(b) we consider the same setup as in Section~\ref{sec:exp-random}; we fix performativity strength at $\alpha=0.5$, and vary training set size. We find that only moderate dataset sizes are necessary for $h_\text{SL}$ to identify a model that is robust to performative distribution shifts.

In Figure~\ref{fig:finite_data}(c) we choose $N=5000$ and investigate the performance of supervised learning as we vary the distance between predictions from $f_\theta$ and $f_\phi$, i.e. the distribution shift between train and test set. We achieve this by interpolating the parameters of the predictive model in the test set between $f_\theta$ and $f_\phi'$ where the latter is trained on randomized labels as before.
We observe that the error and variance of $h_\text{SL}$ grow with the magnitude of the distribution shift. The reason is that failures in the meta model to identify the transferable causal model $\cM_Y$ become more pronounced as distribution shifts get larger. In addition, the variance in the extrapolation error grows with the distance from $f_\theta(x)$ due to data scarcity implied by the shape of the noise distribution in the randomized $f_\theta$.
This observation supports our recommendation to explore the parameter space gradually for policy optimization under performativity, instead of directly extrapolating to models $f_\phi$ that are substantially different from $f_\theta$.

\begin{figure}[t!]
    \captionsetup{font=footnotesize}
    \centering
            \begin{subfigure}[b]{0.32\textwidth}   
            \centering
            \includegraphics[width=\textwidth]{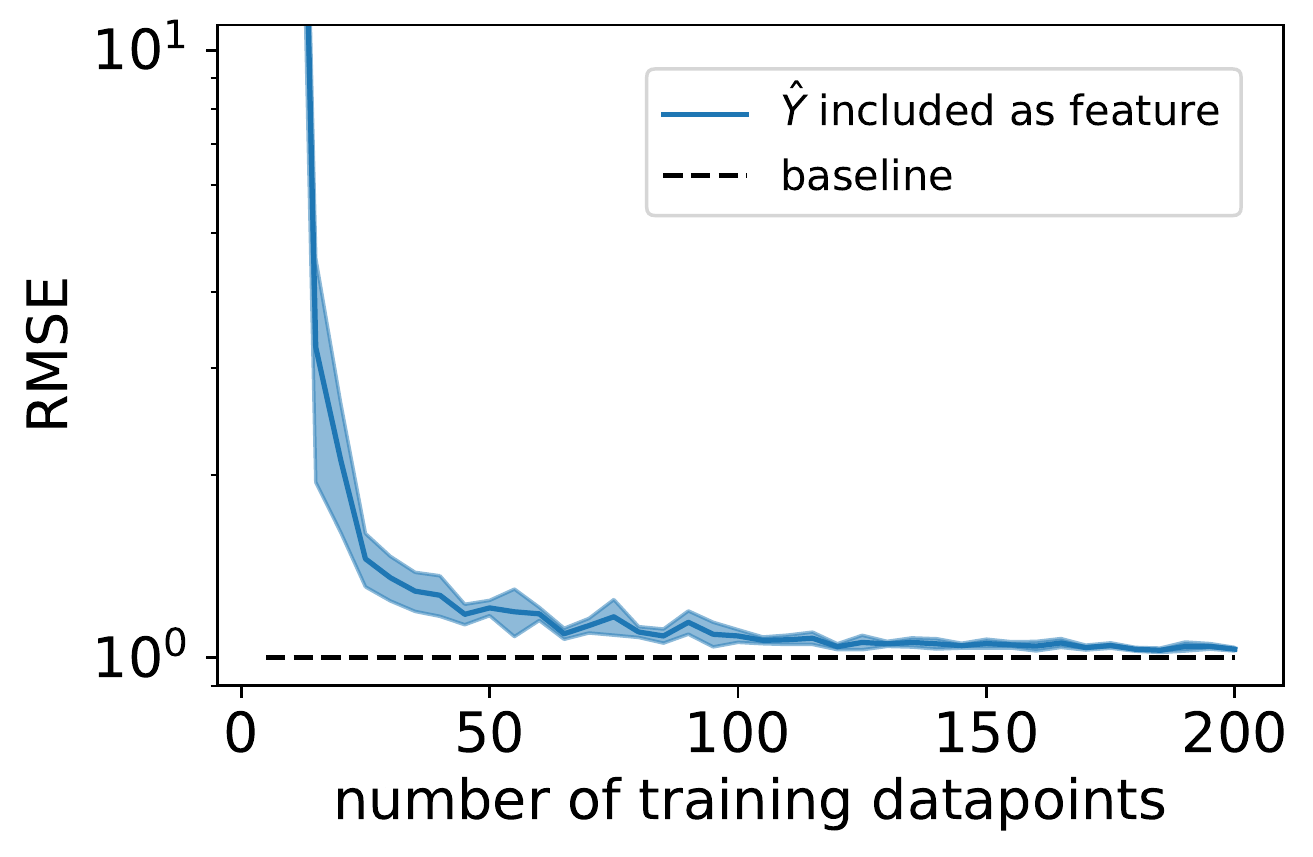}
            \caption{overparameterized $f_\theta$}
            \label{fig:finite_data_c}
        \end{subfigure}
        \hfill
        \begin{subfigure}[b]{0.32\textwidth}   
            \centering 
            \includegraphics[width=\textwidth]{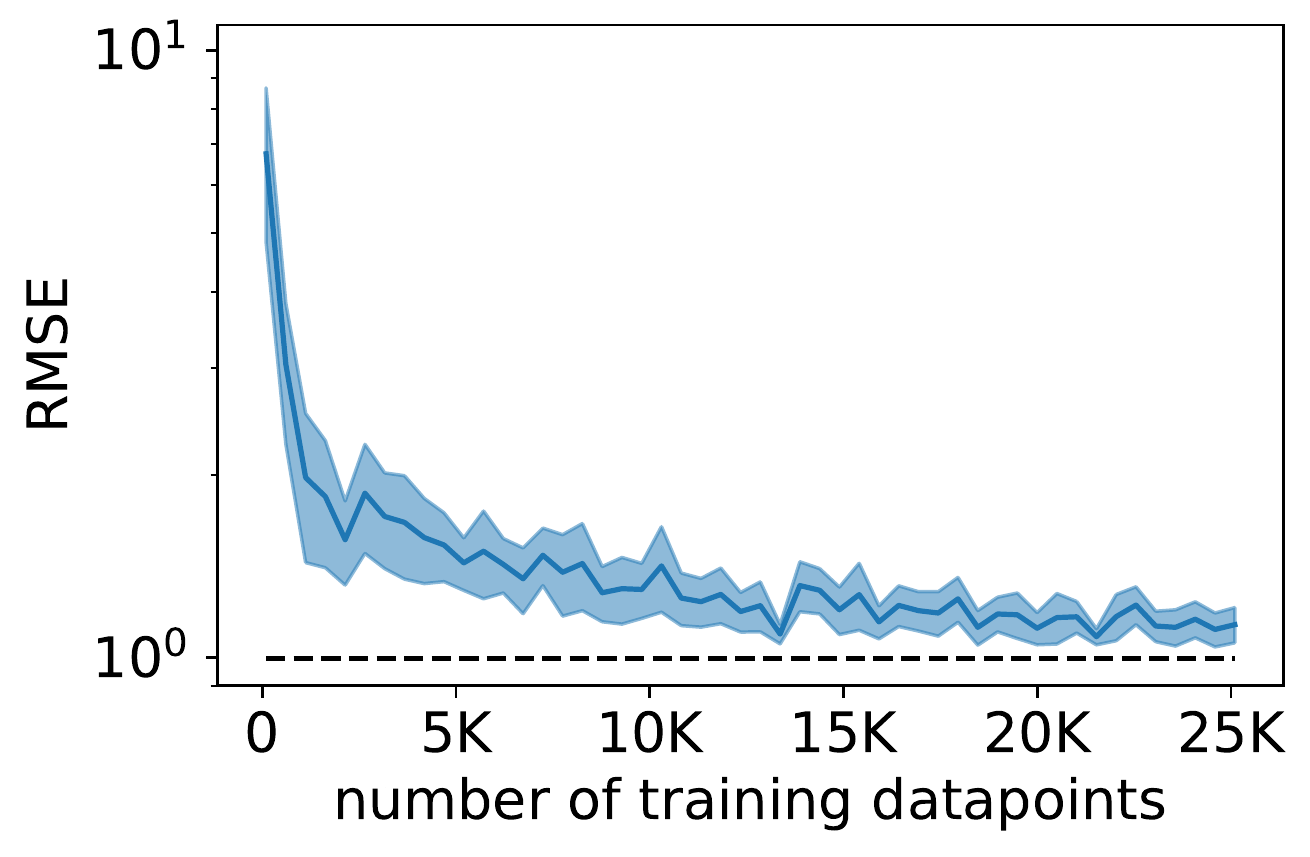}
            \caption{randomized $f_\theta$}
            \label{fig:finite_data_b}
        \end{subfigure}
        \hfill
        \begin{subfigure}[b]{0.317\textwidth}   
            \centering 
            \includegraphics[width=\textwidth]{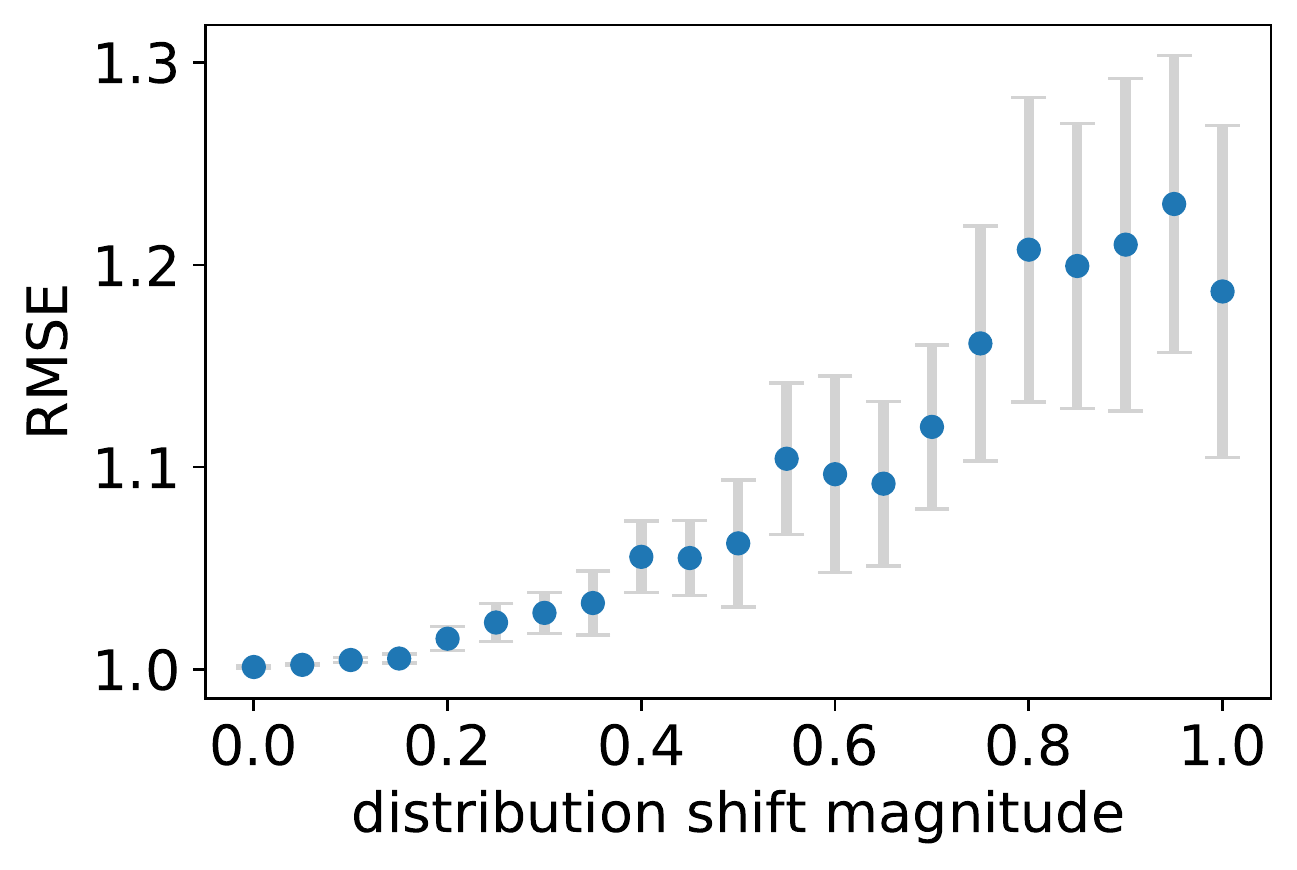}
            \caption{randomized $f_\theta$}
            \label{fig:finite_data_a}
        \end{subfigure}
    \caption{\textbf{Effect of training dataset size.} (a)-(b) With a moderate amount of training data, randomized decisions and overparameterization can find transferable functions $h_\text{SL}$.  (c) The variance in the extrapolation loss increases with the distribution shift magnitude. }
    \label{fig:finite_data}
\end{figure}


\section{Discussion}
\label{sec:interference}

This paper focused on identifying the causal effect of predictions on outcomes using observational data. We point out several natural situations where this causal question can be answered, but we also highlight situations where observational data is not sufficiently informative to reason about performative effects.
By establishing a connection between causal identifiability and the feasibility of anticipating performative effects using data-driven techniques, this paper contributes to a better understanding of the suitability of supervised learning techniques for explaining social effects arising from the deployment of predictive models in economically and socially relevant applications. 

\subsection{The message for data collection practices}

The positive results in this work demonstrate the value of logging information about the state of the deployed prediction function when collecting data for the purpose of supervised learning in social settings. Only if predictions are observed, they can be incorporated to anticipate the performative effects of future model deployments. 
In contrast, if the predictions are not available, $f_\theta$ disrupts the causal relationship between $X$ and $Y$ that we aim to understand, 
leading to unavoidable prediction errors. 
Thus, information about the deployed predictive model is crucial for an analyst hoping to understand the effects of deployed predictive models, for engineers hoping to foresee consequences of new model deployments, and for the research community studying performative phenomena. To date, such data is scarcely available in benchmark datasets, hindering the progress towards a better understanding of performative effects, essential for the reliable deployment of algorithmic systems in the social world.

\subsection{Limitations and extensions}

As we show in the experiments, the success of supervised learning approaches is closely tied to the corresponding identifiability conditions being satisfied. Identifiability can be possible if access to predictions is given. However, information about $\hat Y$ must not be understood as a green light to justify the use of supervised learning techniques to address performativity in full generality. The central assumption of our work is the causal model in Figure~\ref{fig:haty}. While it describes a rich and interesting class of performative prediction problems, it does not account for all mechanisms of performativity. This in turn gives rise to interesting questions for follow-up studies.

\paragraph{Covariate shift due to performativity.} Performative prediction~\citep{perdomo20pp} in full generality allows a predictive model $f_\theta$ to impact the joint distribution $P(X,Y)=P(Y|X)P(X)$ over covariates and labels. In this work, we have assumed that the distribution over covariates is unaffected by the attempt to predict $Y$ from $X$ and performative effects only surface in $P(Y|X)$. For our theoretical results, this implied that overlap in the $X$ variable across environments is trivially satisfied, which enabled us to pinpoint the challenges of learning performative effects due to the coupling between $X$ and $\hat Y$. For establishing identification under performative covariate shift additional steps are required to ensure identifiability.

\paragraph{Performative effects through social influence.}
A second neglected aspect are spill-over effects. Our causal model, proposed in Figure~\ref{fig:haty}, models performative effects at an individual level and relies on the stable unit treatment value assumption (SUTVA)~\citep{imbens2015causal}. There is no possibility for interference in the sense that the prediction of one individual can impact the outcome of his or her peers.
Such an individualistic perspective is not unique to our paper but prevalent in existing causal analyses and model-based approaches to performative prediction and strategic classification~\citep[e.g.,][]{Hardt:2016:SC,jagadeesan21a, miller20b,bechavod20help, ghalme21dark, harris21}. 
However, the presence of interference effects can have important implications for how causal effects should be estimated and interpreted~\citep[cf.][]{sober2006housing, tchetgen2012causal,Aronow17exposure,manski93}, which is yet unexplored in the context of performative prediction. In particular, in the presence of interference effects there is a crucial difference between unilateral interventions on the prediction of a single individual and interventions performed on the entire population, such as the deployment of a new model. This is akin to the important distinction between the individual causal effect and the overall causal effect in treatment effect estimation~\citep[e.g.,][]{tchetgen2012causal}. Concretely, for our model, interference implies that
\begin{equation}
\E[Y_i|X_i=x,\text{do}(f = f_\text{new}))]\;\neq\;\E[Y_i|X_i=x,\text{do}(\hat Y_i=f_\text{new}(x))]
\label{eq:ineq}
\end{equation}
and hence the consequences of intervening on $f$ on individual $i$ can no longer be explained solely by an intervention on the individual's prediction $\hat Y_i$. As a result,  approaches for microfounding performative effect based on models learned from simple, unilateral interventions\footnote{See \citet{bjoerk20} for a related field experiment.} result in different causal estimates than supervised learning based methods for identification as studied in this work.  While interference biases both estimates, a data-driven approach can implicitly pick up patterns of interference effects present in the data despite model-misspecifications, whereas individualistic models are blind to these effects. In Appendix~\ref{sec:interference} we provide an example where this is an advantage: our data-driven approach can exploit network homophily~\citep{goldsmith13} to explain the total causal effect of a model change on the outcome of an individual, whereas individualistic modeling misses out on the indirect component arising from neighbors influencing each other. This raises interesting questions for future work about how to best address interference in the context of performativity.

\paragraph{Performative effects beyond predictions.}
In our model we assumed that performative effects are mediated by the prediction. Thus, the potential outcome after an intervention on the predictive model $f$ could equally be explained by an intervention on the predictions $\hat Y=f(X)$. Under this assumption, treating $\hat Y$ as a feature allowed us to transform the original performative prediction problem with concept shift into a classical supervised learning problem with covariate shift. However, this general strategy is not limited to predictions $\hat Y$ as a sufficient statistic for the shift, but could as well be applied to other performativity-relevant properties of the prediction function $f_\theta$. These could be the relevance of individual model parameters for explaining strategic adaptation, any available information about counterfactual outcomes impacting individual behavior, or the exposure condition in the presence of spillover effects. 
Independent of how we decide to model performative effects, the validity of any causal claim will inevitably be limited to the scope of its assumptions. Extracting the relevant features to base the assumptions on requires domain knowledge---the more expert knowledge we can incorporate about how performative effects arise, the better we can pin down these statistics. This in turn simplifies the learning task and allows us to trade off assumptions with data requirements for causal identifiability.

\paragraph{Performativity in non-causal prediction.} Finally, our causal graph in Figure~\ref{fig:haty} posits that prediction is solely based on features $X$ that are causal for the outcome $Y$. This is a desirable situation in many practical applications because causal predictions disincentivize  gaming of strategic individuals manipulating their features~\citep{miller20b, bechavod20help} and offers explanations for the outcome that persist across environments~\citep{Rojas18transfer, buehlmann18}. Nevertheless, non-causal variables are often included as input features in practical machine learning prediction tasks. Establishing a better understanding for the implications of the resulting causal dependencies due to performativity could be an important direction for future work.

\subsection*{Acknowledgement}
The authors would like to thank Moritz Hardt and Lydia Liu for many helpful discussions throughout the development of this project, Tijana Zrnic, Krikamol Muandet, Jacob Steinhardt, Meena Jagadeesan and Juan Perdomo for detailed feedback on the manuscript, and Gary Cheng for helpful discussions about differential privacy. We are also grateful for a constructive discourse and valuable feedback provided by the reviewers that greatly helped improve the manuscript.

\newpage
\bibliographystyle{plainnat}
\bibliography{mybib}

\newpage

\appendix

\section{Social influence}
\label{sec:interference}

We have mentioned that the stable unit treatment value assumption (SUTVA)~\citep{imbens2015causal} underlying our causal analysis could be violated in certain performative prediction settings due to social influence and spill-over effects. We want to use this section to discuss the simple interference pattern illustrated in Figure~\ref{fig:interference} that generalizes our causal graph from Figure~\ref{fig:haty}. In particular, it allows for predictions of individual $j$ to impact the outcome of individual $i\neq j$:

\begin{equation}
\E[Y_i|X_i=x,\text{do}(f = f^*))]\;\neq\;\E[Y_i|X_i=x,\text{do}(\hat Y_i=f^*(x))]
\label{eq:ineq}
\end{equation}

\begin{figure}[t!]
 \captionsetup{font=small}
\centering
\begin{minipage}{0.38\textwidth}
\begin{tikzpicture}
    \node[state] (x1) at (0,0) {$X_i$};
    \node[state] (yhat1) [right =of x1, xshift=0.2cm] {$\hat Y_i$};
    \node[state] (y1) [right =of yhat1, xshift=0.2cm] {$Y_i$};
    \node[state] (x2) at (0,2) {$X_j$};
    \node[state] (yhat2) [right =of x2, xshift=0.2cm] {$\hat Y_j$};
    \node[state] (y2) [right =of yhat2, xshift=0.2cm] {$Y_j$};
    \node[state] (f) at (0.8,1) {$f_\theta$}; 
    \path (x1) edge (yhat1);
    \path (yhat1) edge (y1);
    \path (x2) edge (yhat2);
    \path (yhat2) edge (y2);
    \path (f) edge (yhat1);
    \path (f) edge (yhat2);
    \path (x1) edge[bend right=40] (y1);
    \path (x2) edge[bend left=40] (y2);
    \path[DarkGreen] (yhat1) edge (y2);
    \path[DarkGreen] (yhat2) edge (y1);
\end{tikzpicture}
\end{minipage}
\begin{minipage}{0.5\textwidth}
\begin{align*}
    X_i &= \xi_X &\xi_X\sim\cD_X\\
    \hat Y_i &=f_\theta(X_i)&\\
    Y_i&= g'(X_i,\hat Y_i, (\hat Y_j)_{j\in[n]}) + \xi_Y  \;\;&\xi_Y\sim\cD_Y
\end{align*}
\end{minipage}
\caption{Performative effects through prediction with interference (green arrows) for $n=2$.}
\label{fig:interference}
\end{figure}

Such effects could arise due to information flow about predictions in the population through social media platforms~\citep{chierichetti09} or verbal sharing. This in turn leads to indirect exposure that can bring forward phenomena of social comparison such as envy or encouragement~\citep{cikara11psycho}. 
In the presence of such interference effects the causal effect of intervening on the predictive model is no longer the same as the causal effect of intervening on an individual's prediction.
On the left-hand side of \eqref{eq:ineq} the predictions of all individuals are changed, whereas on the right-hand side only the prediction of individual $i$ changed. 

In the following, we want to highlight a setting where the data-driven strategy~\eqref{eq:obj} (that builds a model based on data collected under a population intervention) is able to implicitly pick up on these interference effects present in the data, whereas this information is not available from data collected under unilateral interventions.

%

\paragraph{Exposure modeling.}
To formally reason about interference effects through predictions, let's introduce $G_i$ as a sufficient statistic that mediates the dependency among units, such that $ Y_i\bigCI\{\hat Y_{j}\}_{j\neq i}| G_i$ for all $i\in[n]$. The statistic $G_i$ could encode the exposure of the entire population, the average prediction across the population, relevant predictions of the closest neighbors in a social network, or the relative value of $\hat Y_i$ compared to peers in a group. $G_i$ is typically constructed based on domain knowledge and is often assumed to be low-dimensional, limiting the complexity of interference among units and making the problem more tractable. What is unique to the prediction setting studied in this work, compared to randomized treatment assignments, is that predictions (and hence $G_i$) are typically correlated with the covariates and thus inherit structures present in the population, such as network homophily.

\paragraph{Homophily.}  \emph{Homophily} refers to the tendency for individuals to be similar to their neighbors which surfaces in our setting as correlations between the features of neighboring units~\citep{goldsmith13}. In the context of prediction, this further implies that a smooth prediction function $f_\theta$ will also exhibit correlations between predictions assigned to neighbors.  We formalize this through the following  property: \begin{equation}
\label{eq:homo}
|\E_{j\in N(i)}\hat Y_j- \hat Y_i|< \delta \text{ for every }i\in[n]\text{ and some small }\delta\geq0,
\end{equation}
where $N(i)$ denotes the set of neighbors of $i$. 
In the following, we want to highlight that in the presence of homophily the data-driven strategy~\eqref{eq:obj} (that builds a model based on data collected under a population intervention) is able to implicitly pick up on the interference effects present in the data, whereas this information is not available from data collected under unilateral interventions.
More specifically, assume interference effects are mediated by the average prediction in the neighborhood of an individual, i.e., $G_i=\E_{j\in N(i)}\hat Y_j$, then the outcome of individual $i$ can (at least partially) be explained by the prediction $\hat Y_i$ itself.
This results in a machine learning model \eqref{eq:obj} that will implicitly pick up the interference effects from the training data in order to explain the total causal effect of $f_\theta$ on the outcome. This is helpful for prediction, despite a misspecified causal graph. We illustrate this advantageous property over microfoundation models with the following example:

\paragraph{Linear-in-means model.} Consider the following linear-in-means model proposed by~\citet{manski93}:
\begin{equation}Y_i= g(X_i) + \alpha \hat Y_i + \beta G_i\quad \text{where}\quad G_i = \frac 1 {|N(i)|}{\textstyle\sum}_{j\in N(i)} \hat Y_j
\label{eq:llmm}
\end{equation} for some $\alpha>\beta>0$. This structural causal model  describes a setting of positive interference where spillover effects are mediated by the average prediction in the neighborhood of an individual and represent a dampened version of the direct effect. 
We can show that fitting a model $h$ to explain $Y$ as a function of $X$ and $\hat Y$ leads to smaller estimation error than learning $h$ from unilateral interventions.

\begin{proposition}
Given the structural causal model in \eqref{eq:llmm}. Assume the homophily assumption~\eqref{eq:homo} holds for $\delta=0$. Then, under the same identifiability conditions established in Section~\ref{sec:identify} for the SUTVA case. The supervised leanrning solution $h_\text{SL}$ will find a transferrable functional relationship even in the presence of interference.
\end{proposition}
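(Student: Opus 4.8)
The plan is to reduce the interference model \eqref{eq:llmm} to an instance of the separable SUTVA model from Figure~\ref{fig:haty} and then invoke the identifiability results of Section~\ref{sec:identify} together with Lemma~\ref{lem:equal}. The key observation is that homophily \eqref{eq:homo} with $\delta=0$ forces the neighborhood statistic to collapse onto the individual prediction: since $G_i=\E_{j\in N(i)}\hat Y_j$, the bound $|\E_{j\in N(i)}\hat Y_j-\hat Y_i|<\delta=0$ gives $G_i=\hat Y_i$ for every $i\in[n]$. Substituting this identity into \eqref{eq:llmm} eliminates the interference term by folding it into the direct effect:
\[
Y_i = g(X_i)+\alpha\hat Y_i+\beta\hat Y_i+\xi_Y = g(X_i)+(\alpha+\beta)\hat Y_i+\xi_Y.
\]

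First I would observe that this reduced form is exactly the separable structural causal model underlying Section~\ref{sec:identify}, with base function $g_1=g$ and effective performativity strength $\alpha'\defeq\alpha+\beta$; since $\alpha>\beta>0$ we have $\alpha'>0$, so the reduced model is a genuine positive-strength performative effect of the form $g_1(X)+\alpha'\hat Y$. Because $g_1=g$ retains the same function class and the direct term remains linear in $\hat Y$, whichever of the three identifiability hypotheses is assumed---output overlap (Proposition~\ref{prop:dp}), overparameterization (Proposition~\ref{prop:over}), or discreteness (Proposition~\ref{prop:discrete})---applies verbatim to the reduced model. Hence $\cM_Y(x,\hat y)=g(x)+(\alpha+\beta)\hat y$ is identifiable from $\cD(f_\theta)$, and by Lemma~\ref{lem:equal} the risk minimizer satisfies $h_\text{SL}(x,\hat y)=g(x)+(\alpha+\beta)\hat y$ on all of $\cX\times\cY$.

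To close the argument I would verify transferability directly. Under a newly deployed model $f_\phi$, homophily with $\delta=0$ again yields $G_i=\hat Y_i=f_\phi(X_i)$, so the outcome generated under $f_\phi$ has conditional mean $g(x)+(\alpha+\beta)f_\phi(x)$. Evaluating the learned meta-model at $\hat y=f_\phi(x)$ returns precisely this value; thus $h_\text{SL}$ predicts correctly under the population deployment of $f_\phi$, i.e., it transfers even though the true data-generating mechanism contains interference.

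The main conceptual point---which I would emphasize rather than any hard calculation---is that the supervised learner recovers the \emph{total} coefficient $\alpha+\beta$ rather than the direct effect $\alpha$ in isolation. Homophily renders the spill-over component observationally inseparable from the direct component, so predicting-from-predictions automatically absorbs interference into the effective slope; this is exactly why it outperforms microfoundation approaches built from unilateral interventions, which would recover only $\alpha$. The delicate assumption is $\delta=0$: it is what makes the collapse $G_i=\hat Y_i$ exact and hence the reduction to a SUTVA model exact. For $\delta>0$ the same substitution incurs a residual of order $\beta\delta$, yielding only an approximate transferable relationship; quantifying that gap is the natural extension but is not required for the stated $\delta=0$ claim.
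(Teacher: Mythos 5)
Your proposal is correct and follows essentially the same route as the paper: homophily with $\delta=0$ collapses $G_i$ onto $\hat Y_i$, folding the spill-over term into an effective coefficient $\alpha+\beta$ so that \eqref{eq:llmm} reduces to the separable SUTVA model of Section~\ref{sec:identify}, whence the existing identifiability results and Lemma~\ref{lem:equal} give $h_\text{SL}(x,\hat y)=g(x)+(\alpha+\beta)\hat y$. If anything, your write-up is more explicit than the paper's one-paragraph justification, in particular in verifying transferability under $f_\phi$ and in flagging that the $\delta=0$ assumption is what makes the reduction exact.
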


Without explicitly measuring $G_i$, fitting a model to explain $Y$ as a function of $X$ and $\hat Y$ will result in $h(x,\hat y)= g(x) + (\alpha+\beta) \hat y$. This relationship transfers to the deployment of new models (assuming the underlying causal graph is fixed). In contrast, an estimate based on unilateral interventions would result in $h(x,\hat y)= g'(x) + \alpha \hat y$ which systematically underestimates the overall strength of performative effects and thus leads to a biased estimator.

\section{Proofs}

\begin{assumption}[positivity]
Consider the structural causal graph in Figure~\ref{fig:haty}. Positivity of $\hat Y$ over $\mathcal Y$  is satisfied if $P[\hat Y\in\mathcal S|X=x]>0$ for all $x\in\mathcal X$ and all sets $\mathcal S\subseteq \mathcal Y$ with positive measure, i.e., $P[\mathcal S]>0$.
\end{assumption}

\begin{lemma}
If the training distribution satisfies positivity of $\hat Y$ over $\mathcal Y'\subseteq\mathcal Y$, then $\E[Y|X=x,\hat Y=\hat y]$ is identifiable from the training data for any $\hat y\in\mathcal Y'$.
\label{lem:aux}
\end{lemma}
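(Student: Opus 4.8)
The plan is to run the standard identification-under-overlap argument, specialized to the structural model of Figure~\ref{fig:haty}. Write $m(x,\hat y):=\E[Y\mid X=x,\hat Y=\hat y]$ for the observational regression function, a functional of the observed joint law $\cD(f_\theta)$ over $(X,\hat Y,Y)$. I would first express $m$ in closed form by exploiting the exogeneity structure: since $\hat Y=f(X,\xi_f)$ is a deterministic function of $X$ and the noise $\xi_f$, and $\xi_Y$ is exogenous, zero-mean, and independent of $(\xi_X,\xi_f)$, the noise $\xi_Y$ is independent of the pair $(X,\hat Y)$. Substituting $Y=g(X,\hat Y)+\xi_Y$ then gives $m(x,\hat y)=g(x,\hat y)+\E[\xi_Y\mid X=x,\hat Y=\hat y]=g(x,\hat y)$ wherever $m$ is well-defined, so identifying $m$ reduces to recovering $g(x,\hat y)$ from $\cD(f_\theta)$ for each $\hat y\in\mathcal Y'$.

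The second step converts positivity into a support statement. By hypothesis $P[\hat Y\in\mathcal S\mid X=x]>0$ for every $x\in\mathcal X$ and every positive-measure $\mathcal S\subseteq\mathcal Y'$, so the conditional law of $\hat Y$ given $X=x$ charges every positive-measure subset of $\mathcal Y'$; hence $\mathcal Y'\subseteq\mathrm{supp}\,P(\hat Y\mid X=x)$ for all $x$, and the joint support of $(X,\hat Y)$ under $\cD(f_\theta)$ contains $\{(x,\hat y):x\in\mathcal X,\;\hat y\in\mathcal Y'\}$. Since a conditional expectation is uniquely pinned down by the joint law on its support, this yields identifiability of $m(x,\hat y)=g(x,\hat y)$ for every $\hat y\in\mathcal Y'$.

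The step I expect to be the main obstacle is the measure-theoretic handling in the continuous case: because $\{\hat Y=\hat y\}$ has probability zero, $m$ is only defined up to a $\cD(f_\theta)$-null set, so ``identifiable at a point'' must be interpreted through a fixed version of the regression function. Positivity over all positive-measure subsets is precisely what controls this, forcing each $\hat y\in\mathcal Y'$ to carry observed mass in every neighborhood given any $x$ so that no admissible version can deviate on a relatively open set there; combined with continuity of $g$ this determines the value pointwise. In the discrete case the argument is immediate, since positivity gives a genuinely positive mass $P[\hat Y=\hat y\mid X=x]>0$ and the conditional mean is defined literally.
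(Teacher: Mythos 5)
Your proposal is correct. The paper actually states Lemma~\ref{lem:aux} without any proof, treating it as the standard identification-under-positivity fact and invoking it directly in the proof of Proposition~\ref{prop:dp}; your argument is exactly the standard one the paper implicitly relies on (positivity $\Rightarrow$ the pair $(x,\hat y)$ lies in the support of $\cD(f_\theta)$, so the regression function is pinned down there), and your additional care with the reduction $\E[Y\mid X=x,\hat Y=\hat y]=g(x,\hat y)$ and with the null-set issue in the continuous case is a reasonable, slightly more rigorous filling-in of what the paper leaves unstated.
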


\subsection{Proof of Proposition~\ref{propo:ext}}

For notational convenience we write $h_\text{opt}(f_\theta)$ for $h_{f_\theta}^*$.
From realizability it follows that $\mathrm{R}_{f_\theta}(h_\text{opt}(f_\theta))=0$. Hence, the extrapolation loss is equal to 
\[\mathrm {Err}_{f_\theta\rightarrow f_\phi}(h_\text{opt}(f_\theta))=\mathrm{R}_{f_\phi}(h_\text{opt}(f_\theta))-\mathrm{R}_{f_\theta}(h_\text{opt}(f_\theta))=\mathrm{R}_{f_\phi}(h_\text{opt}(f_\theta))\]
and it remains to bound $\mathrm{R}_{f_\phi}(h_\text{opt}(f_\theta))$:
\begin{align}\mathrm{R}_{f_\phi}(h_\text{opt}(f_\theta))&
=\mathrm E_{x,y\sim \cD_\text{X,Y}(f_\phi)} \mathcal L(h_\text{opt}(f_\theta)(x),y)\\
&=\mathrm E_{x\sim \cD_X} \mathcal L(h_\text{opt}(f_\theta)(x),g(x,f_\phi(x)))\\
&=\mathrm E_{x\sim \cD_X} \mathcal L(g(x,f_\theta(x)),g(x,f_\phi(x)))
\end{align}
Further assuming that the loss function $\cL$ is $\mu$-strongly convex and $\gamma$-smooth in the second argument. Then,
\begin{align}
\mathrm{R}_{f_\phi}(h_\text{opt}(f_\theta))&\geq \frac \mu 2 \;\mathrm E_{x\sim \cD_X}  \left(g(x,f_\theta(x))-g(x,f_\phi(x))\right)^2 \\
\mathrm{R}_{f_\phi}(h_\text{opt}(f_\theta))&\leq \frac \gamma 2 \;\mathrm E_{x\sim \cD_X}  \left(g(x,f_\theta(x))-g(x,f_\phi(x))\right)^2 
\end{align} and the result follows.

\subsection{Proof of Lemma~\ref{lem:equal}}

Given that the risk minimization problem \eqref{eq:obj} is realizable and $\cM_Y$ is uniquely identifiable over $\cH$, the risk minimizer of the squared loss corresponds to $\E[Y|X=x,\hat Y=\hat y]$. Given the graph structure in Figure~\ref{fig:haty} there is no unobserved confounding and hence \[\E[Y|X=x,\hat Y=\hat y]=\E[Y|X=x,\text{do}(\hat Y=\hat y)]\implies h_\text{SL}(x,\hat y)=\cM_Y(x,\hat y).\]

\subsection{Proof of Proposition~\ref{propo:deterministic}}

Our goal is to show that $\cM_Y(X,\hat Y)$  can not uniquely be identified from $\cD(f_\theta)$ if $f_\theta$ is a deterministic function. The proof is by construction of a function $ h$ that fits the training data equally well, but does not generalize to data induced by a new prediction function.

Since $f_\theta$ is deterministic it holds that $\hat y =f_\theta(x)$ for all pasirs $x, \hat{y}$ in the observed data distribution. Thus, the function $h$ defined as follows
\[ h(x,\hat{y}) = \cM_Y(x, f_\theta(x))\] is equally compatible with the observational data. That is
\[\E_{\hat{Y}=f_\theta(X)}\E[Y|X, \hat{Y}] = \cM_Y(X, \hat{Y}) = {h}(X, \hat{Y}).\]  
Hence, $\cM_Y$ can not be distinguished from $ h$ based on observational data. It remains to show that $\cM_Y$ and $h$ do not coincide on new data. 

We assume that $Y$ non-trivially depends on $\hat Y$ and $\cY$ is not a singleton. This means, given some $x$, for every $\hat y$ there exists a $\hat y'\in\cY$ such that $g(x,\hat y)\neq g(x,\hat y')$. Define $f_\phi$ such that for and $x$ if $f_\theta(x)=\hat y$, we set $f_\phi(x) = \hat y'$. Then, 
\[\E_{\hat{Y}=f_\phi(X)}\E[Y|X, \hat{Y}] = \cM_Y(X, \hat{Y}) \neq {h}(X, \hat{Y})\]
which concludes the proof.

\subsection{Proof of Proposition~\ref{prop:dp}}

Output overlap guarantees that $P[\hat Y =f_\phi(x)| X=x]>0$ in the training distribution $\cD(f_\theta)$ for any $x\in\cX$. Identification and extrapolation to models $f_\phi$ that satisfy output overlap with $f_\theta$ follows from positivity (Lemma~\ref{lem:aux}) and the causal graph (\Cref{fig:haty}) which implies that there is no unobserved confounding and $\E[Y|X=x,\hat Y=\hat y] = \E[Y|X=x,\mathrm{do}(\hat Y=\hat y)]$.

\subsection{Proof of Proposition~\ref{prop:over}}

We first note that the overparameterization assumption implies that $g_1\in \cG$ and $f_\theta$ are linearly independent. Proof by contradiction: If not, then there exists $\alpha_1\ne\alpha'_1, \alpha_2\ne\alpha'_2$ and functions $g_1, g'_1\in \cG$ such that $\alpha_1 g_1(x) + \alpha_2 f_\theta(x) = \alpha'_1 g'_1(x) + \alpha'_2 f_\theta(x)$ for all $x$; it implies that $\alpha_1g_1(x)-\alpha'_1g'_1(x)   = (\alpha_2-\alpha'_2)  f_\theta(x) $ for all $x$. $\alpha_1g_1(x)-\alpha'_1g'_1(x) \in \cG$ since the class $\cG$ is closed under addition. This leads to a contradiction with the fact that $f_\theta(\cdot)$ is overparametrized with respect to $\cG$, which requires there exist no function $g\in\cG$ such that $g(x) = c f_\theta(x)$ for some $c>0$.

Next, since any $h\in \mathcal{H}$ is separable in $X$ and $\hat{Y}$, linear in $\hat{Y}$, and that $h(\cdot, \hat{y})\in \cG$ for any $\hat{y}$, we have that $h(x, \hat{y}) = g'_1(x) + \alpha' \hat{y}$ for some $g'_1\in \cG$ and some constant $\alpha'\in \mathbb{R}$. Therefore, finding $\cM_Y$  amounts to solve $g'_1, \alpha'$ from the observational data relationship $g_1(X) + \alpha \hat{Y} = g'_1(X) + \alpha' \hat{Y}$ subject to the constraint that $\hat{Y}=f_\theta(X)$. Plugging in the constraints gives $g_1(X) + \alpha f_\theta(X) = g'_1(X) + \alpha'f_\theta(X)$. This equation gives a unique solution that $g'_1=g_1$ and $\alpha'=\alpha$ if we have observation from all values of $X$, hence the identifiability of $\cM_Y$.



\subsection{Proof of Proposition~\ref{prop:discrete}}

The proof is inspired by~\citep{wang19bless} and \citep{puli2020causal}. Because $\hat Y$ is discrete and $\mathbb{E}[Y| \mathrm{do}(\hat{Y}=\hat{y}),X=x]$ is separable, we have that  $ \frac{\partial }{\partial x} \mathbb{E}[Y| \mathrm{do}(\hat{Y}=\hat{y}),X=x] = \frac{\partial }{\partial x} g_1(x) = \frac{\partial }{\partial x} \mathbb{E}[Y| \hat{Y}=\hat{y}',X=x]$ for any pair of $\hat{y}, x$ that is observable, i.e. $\hat{y}' = f_\theta(x)$. This implication is due to $g_2(\hat{y})$ being a piecewise constant function  (its partial derivative is zero with respect to $x$). Therefore, the function $\cM_Y$ is identifiable
\[\cM_Y(x,\hat{y}) = \mathbb{E}[Y|\mathrm{do}( \hat{Y}=\hat{y}),X=x] = \mathbb{E}[Y| \hat{Y}=\hat{y},X=x'] + \int_{x'}^{x} \frac{\partial }{\partial x} \mathbb{E}[Y| \hat{Y}=\hat{y}',X=x] \mathrm{d}x,\]
for any $\hat{y}' = f_\theta(x)$. This equation
establishes the identifiability of $\cM_Y$. It also implies that the solution of the risk minimization problem \eqref{eq:obj} must  coincides with $\cM_Y$ if $\cH$ satisfies the identifiability condition, i.e. $\cH$ contains only separable functions $g(x, \hat{y})$ with differentiable $g_1, g_2$; these constraints implies the uniqueness of solution to the risk minimization problem, hence the solution must coincide with $\cM_Y$.

\section{Experiment details and additional experiments} \label{appendix:exp}
\subsection{Data and Licenses}
Data in folktables was extracted from Census Bureau databases, which collected data in standardized surveys with consent.\footnote{documentation: \url{https://www.census.gov/programs-surveys/acs/microdata/documentation.html}.} The Census Bureau takes care to ensure that through their pre-processing of survey results, personally identifiable information is not included in their data releases.\footnote{Terms of service: \url{https://www.census.gov/data/developers/about/terms-of-service.html}.}

The income dataset with binary outcome variables used for the results in the main body of the paper is the ACSIncome task defined in folktables, with data from the 2018 Census from the state of California. The income dataset with continuous outcome variables is a modified version of ACSIncome that performs the same pre-processing, except it leaves the income target variable as a real number, rather than thresholding to produce a binary outcome. Additional experiments below (referred to as Census travel time) were conducted on the ACSTravelTime task defined in folktables, with data from the 2018 Census from the state of California. The features and preprocessing for these datasets can be found in the code documentation of \cite{ding2021retiring}.
The data contains 10 features and the target variable is a binary indicator for whether an individual became delinquent on a loan: 
\begin{description}
\item[X=][RevolvingUtilizationOfUnsecuredLines, age, NumberOfTime30-59DaysPastDueNotWorse, 	DebtRatio, MonthlyIncome, NumberOfOpenCreditLinesAndLoans, NumberOfTimes90DaysLate, NumberRealEstateLoansOrLines, NumberOfTime60-89DaysPastDueNotWorse, NumberOfDependents],
\item[Y=] SeriousDlqin2yrs
\end{description}

\subsection{Experimental details}
Machine learning models were trained using functionalities from sklearn \citep{scikit-learn} with default parameters if not specified otherwise. We use the class \texttt{LinearRegression} from \texttt{sklearn.linear$\_$model} for the linear models and the class \texttt{MLRRegressor} from \texttt{sklearn.neural$\_$network} for the fully connected neural network models. 

\begin{description}
\item[Training data:] The Census income dataset composes of 195665 datapoints, if not specified otherwise the full dataset was used for training.
\item[Test dataset:] We train $f_\phi$ on randomized labels. More precisely, we randomly shuffle the labels among data points in the original dataset to obtain $f_\phi$. This process leads a model that is different from $f_\theta$ which serves to test the extrapolation performance of our meta-machine learning model.  
\item[Overparameterization:] For the experiment in Figure~\ref{fig:large_data}(c) second degree polynomial features were included to achieve overparameterization (using \texttt{sklearn.preprocessing}), no further hyperparameters were set; all second-order terms were included in the overparameterization.
For the experiments in Figure~\ref{fig:vary}(a) we simulate the degree of overparameterization by working with neural networks and varying the number of neurons in the hidden layers using the parameter \texttt{
hidden$\_$layer$\_$sizes}. 
\item[Randomization:] For the randomized decision experiments we use Gaussian noise. If not specified otherwise it is drawn from $\cN(0,1)$. 
\item[Discretization:] To obtain discrete predictions, we round the prediction outputs of the linear model $f_\theta$ so we achieve $4$ distinct discrete values.
\item[Finite data:] The finite data experiments were conducted on datasets with a continuous target variable $Y$ (Census income), and performance was assessed using root mean squared error (RMSE). To simulate the effect of training set size we randomly subsample the original data to obtain a smaller training set size for our meta machine learning model. 
\item[Distirbution shift:]
We simulate different amounts of distribution shift by choosing $\phi' =\rho \theta + (1-\rho) \phi$
where $\theta$ are the parameters of the model trained on the original data, and $\phi$ are the parameters of a model trained on randomized labels. 
\item[Infrastructure:] Experiments were run on 4 CPU cores for a total of 200 hours.
\item[Baseline:] The baseline in the plots is the RMSE of a model trained on samples from the test set and evaluated on a validation set that is held out from the test set, but from the same distribution. Thus it represents a setting with no distribution shift.
\end{description}

\subsection{Additional experiments: Robustness to model misspecifications} 

We investigate the robustness of supervised learning to misspecification of $g$. Therefore we focus on discrete classification where $f_\theta$ and $f_\phi$ are binary predictors. We use gradient boosted decision trees implemented in sklearn \citep{scikit-learn} with the default hyperparameters.
Performance is assessed using classification accuracy. 
Experiments are performed for the dataset used in the main body, as well as the Census travel time and Kaggle credit score datasets~\citep{kaggle}.

Our theoretical result in Proposition~\ref{prop:discrete} depend on knowing the correct model class $\mathcal H$ to optimize $h_\text{SL}$ on. In this section we test the resiliency to model misspecification. Therefore, we define $\mathcal H$ to be the class of linear functions but construct a non-linear data generation process as follows
\begin{equation}
    g(x, \hat{y}) =  \hat{y} \text{ with probability } p, \text{and } g(x, \hat{y}) = g_1(x)  \text{ otherwise,}
\end{equation}
where $g_1(x)$ is a (possibly non-linear) function that maps $x$ to its original label $y$ in the original dataset, and $p \in [0, 1]$ is a hyperparameter for performativity strength that we vary. Like in the finite data experiments, we also vary the distance between predictions from $f_\theta$ and $f_\phi$, i.e. distribution shift magnitude.

\paragraph{Effect of distance between $f_\theta$ and $f_\phi$.} In Figure~\ref{fig:misspecification}(a) we investigate the effect of the distirbutionshift magnitude for $p=0.5$. The distribution shift magnitude is simulated by changing the data that $f_\phi$ is trained on. As in the finite data experiments, $f_\phi$ is fit on a noisy version of the original dataset, where we tune the level of noise and generate noisy labels $y'$ via
\[y' = y \text{ with probability } 1-\gamma, \text{ and } 1-y \text{ with probability } \gamma.\]
In other words, $\gamma$ parameterizes the distance between the predictors $f_\theta$ (fit to clean data) and $f_\phi$ fit to noisy data.  With $\gamma=0.5$ the label $y$ and $y'$ are uncorrelated.

We observe that despite misspecification, the meta model benefits of having access to $\hat y$ and the accuracy of $h_\text{SL}$ remains close to in-distribution accuracy as long as distribution shifts are not too large (specifically, until $f_\theta$ and $f_\phi$ become uncorrelated). 

\paragraph{Strength of performativity.} Next, we investigate the effect of varying $p$ for a fixed $\gamma=0.45$.
Figure \ref{fig:misspecification}(b) highlights that the benefit of adding $\hat y$ as a feature persists across almost all values of $p$. However, $h_\text{SL}$ is more prone to errors from model misspecification when performativity is very weak. This is intuitive, since 
$\hat{Y}$ is correlated with the outcome $Y$, and a misspecified $h_\text{SL}$ might be best off attributing this correlation to the causal link; in such extreme cases, the results suggests that accuracy is slightly improved by dropping $\hat{Y}$ as a feature.

\paragraph{Weaker accuracy of $f_\theta$.} Finally, we investigate the effect of varying $p$ for a model $f_\theta$ that is fit to noisy labels in Figure~\ref{fig:misspecification}(c). We see that if the accuracy of $f_\theta$ is reduced (by fitting to noisy labels), the superiority of performativity-agnostic learning for $p\rightarrow 0$ disappears.

In summary, we found qualitatively similar results across all datasets. Including $\hat{Y}$ as a feature outperforms not including it, even when little performativity is present.

\begin{figure}[b!]
\captionsetup{font=footnotesize}
    \centering
    \begin{subfigure}[b]{0.32\textwidth}
            \centering
            \includegraphics[width=\textwidth]{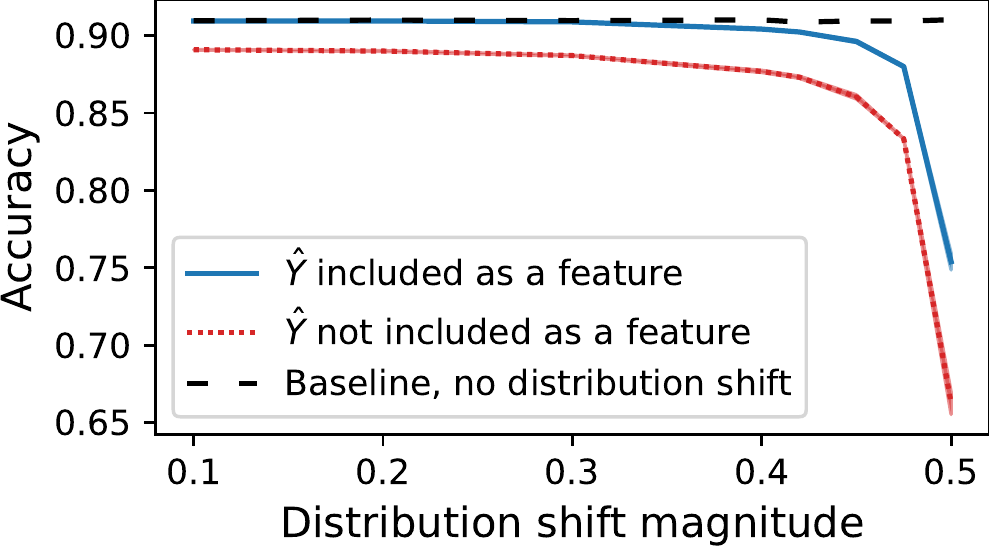}
        \end{subfigure}
        \hfill
        \begin{subfigure}[b]{0.32\textwidth}  
            \centering 
            \includegraphics[width=\textwidth]{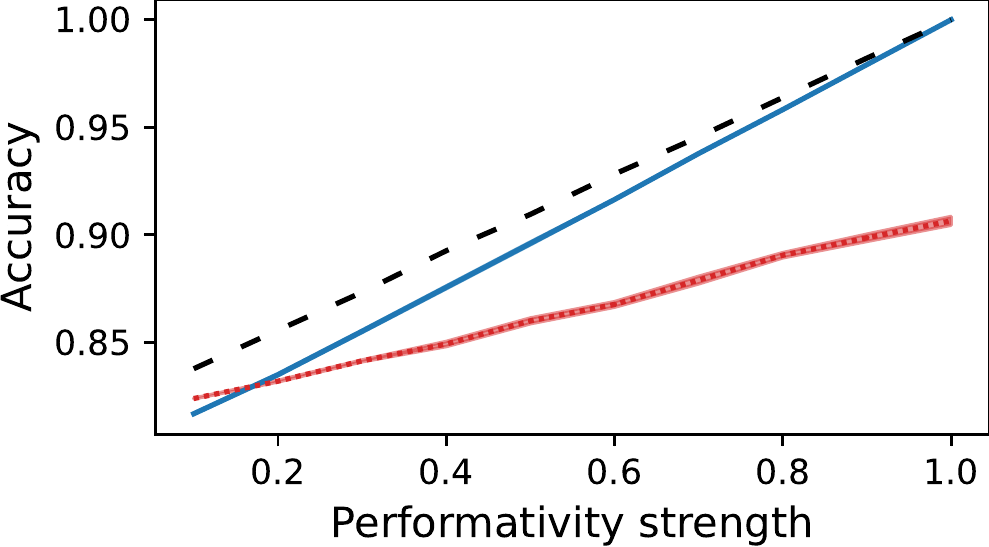}
        \end{subfigure}
        \hfill
        \begin{subfigure}[b]{0.32\textwidth}  
            \centering 
            \includegraphics[width=\textwidth]{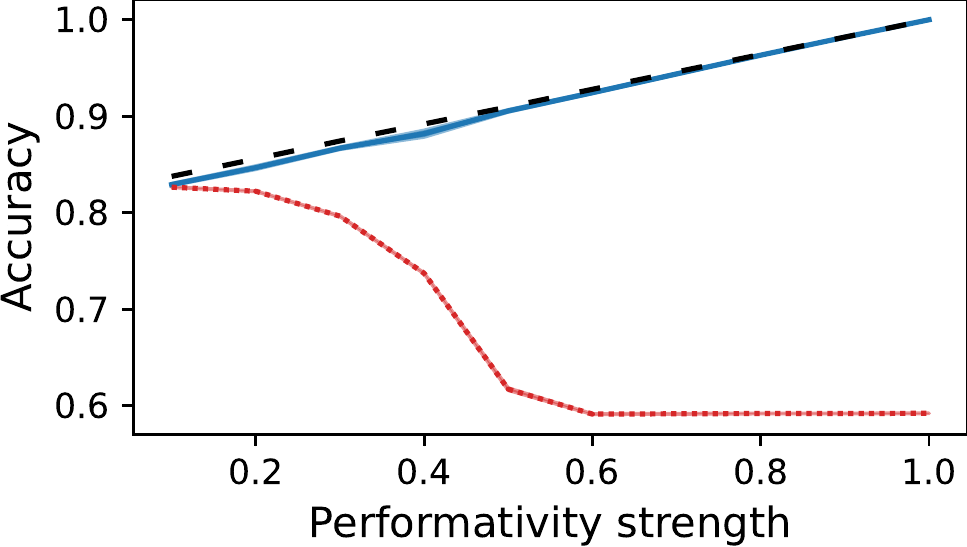}
        \end{subfigure}\\\vspace{0.5cm}
           \begin{subfigure}[b]{0.32\textwidth}
            \centering
            \includegraphics[width=\textwidth]{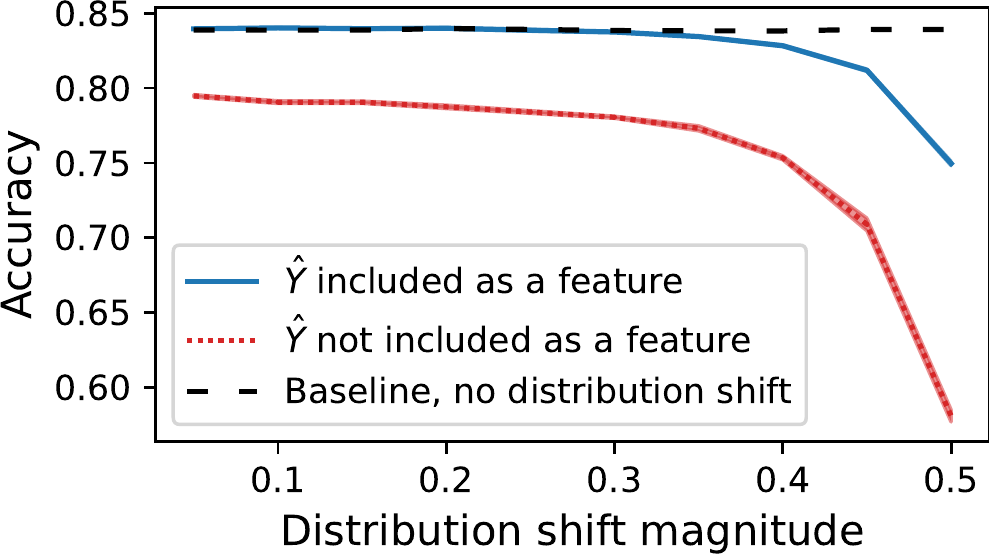}
        \end{subfigure}
        \hfill
        \begin{subfigure}[b]{0.32\textwidth}  
            \centering 
            \includegraphics[width=\textwidth]{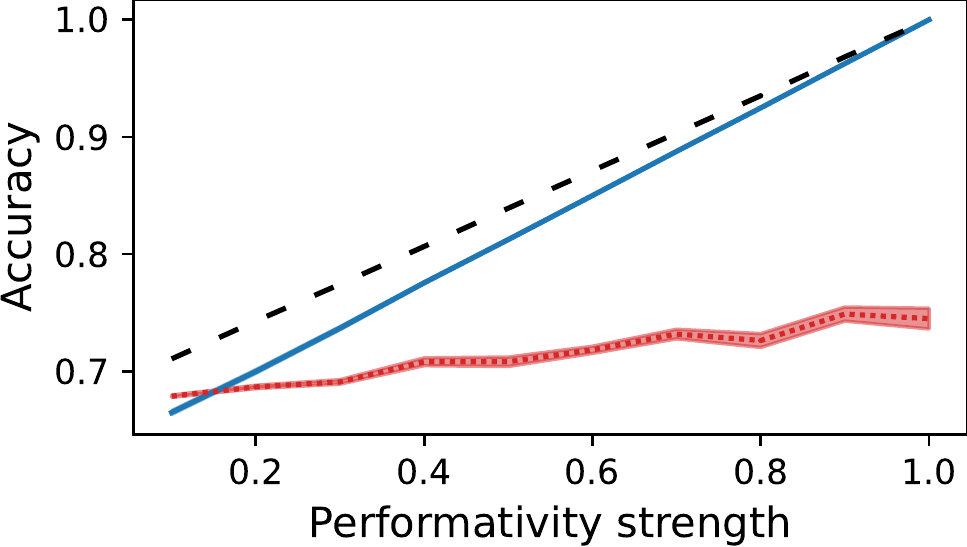}
        \end{subfigure}
        \hfill
        \begin{subfigure}[b]{0.32\textwidth}  
            \centering 
            \includegraphics[width=\textwidth]{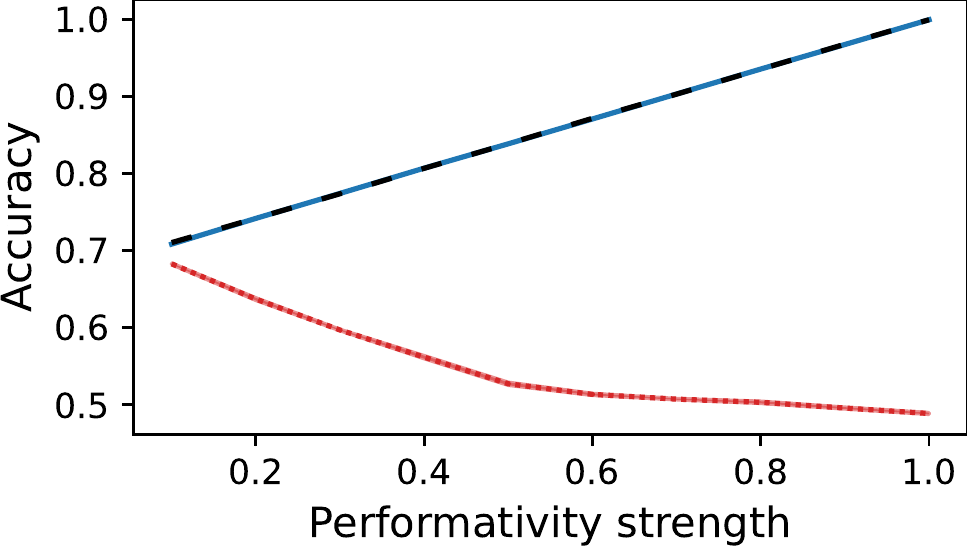}
        \end{subfigure}\\\vspace{0.5cm}
        
    \begin{subfigure}[b]{0.32\textwidth}
            \centering
            \includegraphics[width=\textwidth]{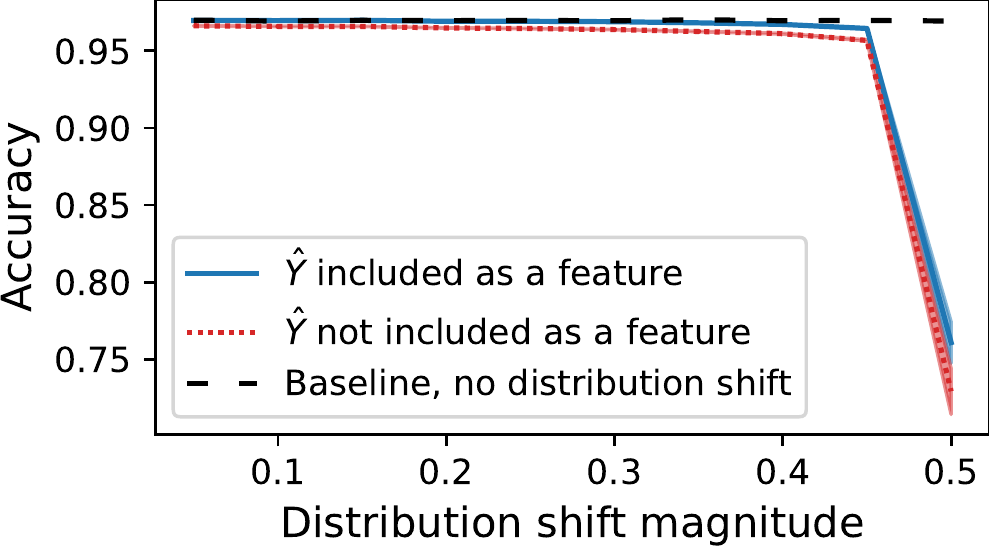}
            \caption{Distance between $f_\theta$ and $f_\phi$}
        \end{subfigure}
        \hfill
        \begin{subfigure}[b]{0.32\textwidth}  
            \centering 
            \includegraphics[width=\textwidth]{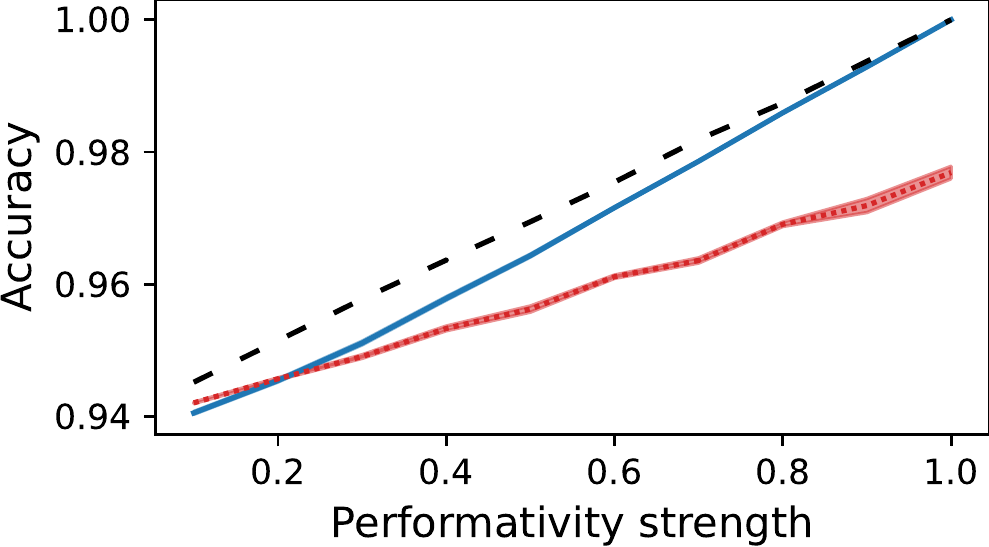}
            \caption{Standard setup: $f_\phi$ (test) fit to noisy labels}
        \end{subfigure}
        \hfill
        \begin{subfigure}[b]{0.32\textwidth}  
            \centering 
            \includegraphics[width=\textwidth]{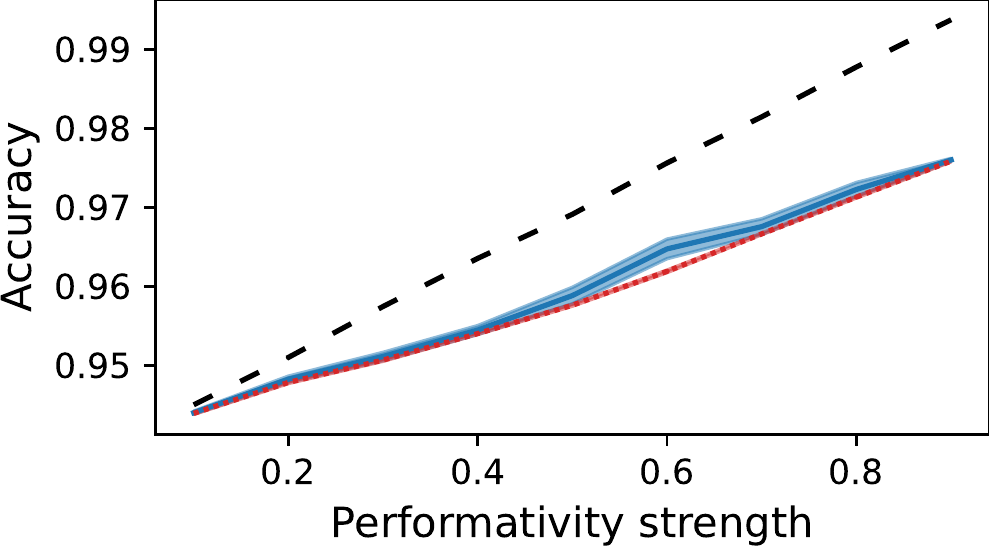}
            \caption{Reversed: $f_\theta$ (train) fit to noisy labels}
        \end{subfigure}
    \caption{\textbf{Performance degrades gracefully under model misspecification for different datasets.} Census income prediction dataset(first row), Census travel time dataset (second row), Kaggle credit score dataset (thrid row). (a) Accuracy on the test distribution (higher is better) is plotted against distribution shift magnitude; supervised learning remains accurate until the train and test predictors, $f_\theta$ and $f_\phi$, are uncorrelated (shift magnitude of 0.5). (b) Accuracy is plotted against performativity strength; despite model misspecification, accuracy is higher when $\hat{Y}$ is included as a feature, across most performativity strengths. (c) When the training set predictor $f_\theta$ is fit to random labels and is less accurate than $f_\phi$, including $\hat{Y}$ as a feature universally improves accuracy.} 
    \label{fig:misspecification}
\end{figure}

\newpage
\section{Societal impact}

The fact that predictions are performative and have an impact on the population they predict is a natural phenomenon observed in various applications. In this work, we discuss one dimension of performativity and investigate how to develop an improved causal understanding of these performative effects from data. Our intent is to develop this understanding from observational data in order to foresee potential negative consequences of a future model deployment before actually deploying it across an entire population. 
Typical machine learning approaches would not take these consequences into account when training a predictive model. At most, they would observe performative effects in a monitoring step after deploying a model and then decide post-hoc whether the model satisfies a given constraint. At this point, harm might already have been caused, even if unintentional. Naturally, though, any improvement in understanding can also be used with bad intent. Instead of being treated as a potential for harm to mitigate against, performative effects could also be instrumentalized by profit-maximizing firms or self-interested agencies in order to achieve their goals~\citep{hardt22power}. These goals might not always be aligned with social welfare and if the respective firm has high performative power, i.e. ability to influence performative effects, these actions hold the potential for social harm.

\end{document}